\newtheorem{theorem}{Theorem}
\newtheorem{lemma}[theorem]{Lemma}
\newcommand{\norm}[1]{\left\lVert #1\right\rVert}
\newcommand{\EE}{\mathbb{E}}
\newcommand{\PP}{\mathbb{P}}
\newcommand{\RR}{\mathbb{R}}
\newcommand{\bM}{\boldsymbol{M}}
\newcommand{\bb}{\boldsymbol{b}}
\newcommand{\bG}{\boldsymbol{G}}
\newcommand{\bx}{\boldsymbol{x}}
\newcommand{\btheta}{\boldsymbol{\theta}}
\newcommand{\bI}{\boldsymbol{I}}
\newcommand{\cE}{\mathcal{E}}
\newcommand{\argmax}{\mathrm{argmax}}
\newtheorem{assumption}{\bf Assumption}
\title{Online Corrupted User Detection and Regret Minimization}  
\author{%
  Zhiyong Wang \\
  The Chinese University of Hong Kong\\ \texttt{zywang21@cse.cuhk.edu.hk} \\
  \And 
  Jize Xie\\
  Shanghai Jiao Tong University\\
  \texttt{xjzzjl@sjtu.edu.cn}
   \And
   Tong Yu \\
  Adobe Research\\
  \texttt{worktongyu@gmail.com} 
   \And
   Shuai Li\thanks{Corresponding author.}\\
  Shanghai Jiao Tong University\\
  \texttt{shuaili8@sjtu.edu.cn}\\
   \And
   John C.S. Lui \\
   The Chinese University of Hong Kong \\
    \texttt{cslui@cse.cuhk.edu.hk}\\
}
\begin{document}
\maketitle

\begin{abstract}

In real-world online web systems, multiple users usually arrive sequentially into the system. For applications like click fraud and fake reviews, some users can maliciously perform corrupted (disrupted) behaviors to trick the system. Therefore, it is crucial to design efficient online learning algorithms to robustly learn from potentially corrupted user behaviors and accurately identify the corrupted users in an online manner. 
Existing works propose bandit algorithms robust to adversarial corruption. However, these algorithms are designed for a single user, and cannot leverage the implicit social relations among multiple users for more efficient learning. Moreover, none of them consider how to detect corrupted users online in the multiple-user scenario. In this paper, we present an important online learning problem named LOCUD to learn and utilize unknown user relations from disrupted behaviors to speed up learning, and identify the corrupted users in an online setting. To robustly learn and utilize the unknown relations among potentially corrupted users, we propose a novel bandit algorithm RCLUB-WCU. To detect the corrupted users, we devise a novel online detection algorithm OCCUD based on RCLUB-WCU's inferred user relations. We prove a regret upper bound for RCLUB-WCU, which asymptotically matches the lower bound with respect to $T$ up to logarithmic factors, and matches the state-of-the-art results in degenerate cases. We also give a theoretical guarantee for the detection accuracy of OCCUD. With extensive experiments, our methods achieve superior performance over previous bandit algorithms and high corrupted user detection accuracy.
\end{abstract}

\section{Introduction}

In real-world online recommender systems, data from many users arrive in a streaming fashion \cite{chu2011contextual,kohli2013fast,aggarwal2016recommender,garcelon2020adversarial,wang2023efficient,liu2023contextual,liu2022batch}. There may exist some corrupted (malicious) users, whose behaviors (\emph{e.g.}, click, rating) can be adversarially corrupted (disrupted) over time to fool the system \cite{lykouris2018stochastic,ma2018data,he2022nearly,hajiesmaili2020adversarial,gupta2019better}. 
These corrupted behaviors could disrupt the user preference estimations of the algorithm. As a result, the system would easily be misled and make sub-optimal recommendations \cite{jun2018adversarial,liu2019data,garcelon2020adversarial,zuo2023adversarial}, which would hurt the user experience.
Therefore, it is essential to design efficient online learning algorithms to robustly learn from potentially disrupted behaviors and detect corrupted users in an online manner.

There exist some works on bandits with adversarial corruption \cite{lykouris2018stochastic,gupta2019better,li2019stochastic,ding2022robust,he2022nearly,kong2023best}. However, they have the following limitations. First, existing algorithms are initially designed 
for 
robust online preference learning of a single user. In real-world 
scenarios with multiple users, they cannot robustly infer and utilize the implicit user relations for more efficient learning. Second, none of them consider how to identify corrupted users online in the multiple-user scenario. Though there also exist some works on corrupted user detection \cite{wang2019semi,dou2020enhancing,zhang2021fraudre,liu2021pick,huang2022auc}, they all focus on detection with \textit{known} user information in an offline setting, thus 
can not be applied to do online detection from bandit feedback.

To address these limitations, we propose a novel bandit problem ``\textit{Learning and Online Corrupted Users Detection from bandit feedback}" (LOCUD). To model and utilize the relations among users, we assume there is an \textit{unknown} clustering structure over users, where users with similar preferences lie in the same cluster \cite{gentile2014online,li2018online,li2019improved}. The agent can infer the clustering structure to leverage the information of similar users for better recommendations. Among these users, there exists a small fraction of corrupted users. They can occasionally perform corrupted behaviors to fool the agent \cite{he2022nearly,lykouris2018stochastic,ma2018data,gupta2019better} while mimicking the behaviors of normal users most of the time to make themselves hard to discover. 
The agent not only needs to learn the \textit{unknown} user preferences and relations robustly from potentially disrupted feedback, balance the exploration-exploitation trade-off to maximize the cumulative reward, but also needs to detect the corrupted users online from bandit feedback. 

The LOCUD problem is very challenging. 
First, the corrupted behaviors would cause inaccurate user preference estimations, which could lead to erroneous user relation inference and sub-optimal recommendations. 
Second, it is nontrivial to detect corrupted users online
since their behaviors are dynamic over time (sometimes regular while sometimes corrupted), whereas, in the offline setting, corrupted users' information can be fully represented by static embeddings and the existing approaches \cite{li2022dual,qin2022explainable} can typically do binary classifications offline, which are not adaptive over time.

We propose a novel learning framework composed of two algorithms to address these challenges. 

\noindent{\textbf{RCLUB-WCU.}} 
To robustly estimate user preferences, learn the unknown relations from potentially corrupted behaviors, and perform high-quality recommendations,
we propose a novel bandit algorithm ``\textit{Robust CLUstering of Bandits With Corrupted Users}" (RCLUB-WCU), which maintains a dynamic graph over users to represent the learned clustering structure, where users linked by edges are inferred to be in the same cluster. RCLUB-WCU adaptively deletes edges and recommends arms based on aggregated interactive information in clusters. 
 We do the following to ensure robust clustering structure learning. (i) To relieve the estimation inaccuracy caused by disrupted behaviors, we use weighted ridge regressions for robust user preference estimations.
 Specifically, we use the inverse of the confidence radius to weigh each sample. If the confidence radius associated with user $i_t$ and arm $a_t$ is large at $t$, the learner is quite uncertain about the estimation of $i_t$'s preference on $a_t$, indicating the sample at $t$ is likely to be corrupted. 
Therefore, we use the inverse of the confidence radius to assign minor importance to the possibly disrupted samples when doing estimations. 
(ii) We design a robust edge deletion rule to divide the clusters by considering the potential effect of corruptions, which, together with (i), can ensure that after some interactions, users in the same connected component of the graph are in the same underlying cluster with high probability.

\noindent{\textbf{OCCUD.}} To detect corrupted users online, based on the learned clustering structure of RCLUB-WCU, we devise a novel algorithm named ``\textit{Online Cluster-based Corrupted User Detection}" (OCCUD). At each round,
we compare each user's non-robustly estimated preference vector (by ridge regression) and the robust estimation (by weighted regression) of the user's inferred cluster. If the gap exceeds a carefully-designed threshold, we detect this user as corrupted. The intuitions are as follows. With misleading behaviors, the non-robust preference estimations of corrupted users would be far from ground truths. On the other hand, with the accurate clustering of RCLUB-WCU, the robust estimations of users' inferred clusters should be close to ground truths. Therefore, for corrupted users, their non-robust estimates should be far from the robust estimates of their inferred clusters.


We summarize our contributions as follows.\\
    $\bullet$ We present a novel online learning problem LOCUD, where the agent needs to (i) robustly learn and leverage the unknown user relations to improve online recommendation qualities under the disruption of corrupted user behaviors; 
    (ii) detect the corrupted users online 
 from bandit feedback. \\
    $\bullet$ We propose a novel online learning framework composed of two algorithms, RCLUB-WCU and OCCUD, to tackle the challenging LOCUD problem. RCLUB-WCU robustly learns and utilizes the unknown social relations among potentially corrupted users to efficiently minimize regret. Based on RCLUB-WCU’s inferred user relations, OCCUD accurately detects corrupted users online. \\
    $\bullet$ We prove a regret upper bound for RCLUB-WCU, which matches the lower bound asymptotically in $T$ up to logarithmic factors and matches the state-of-the-art results in several degenerate
cases. We also give a theoretical performance guarantee for the online detection algorithm OCCUD.\\
    $\bullet$ Experiments on both synthetic and real-world data clearly
show the advantages of our methods.

\section{Related Work}

Our work is related to bandits with adversarial corruption and bandits leveraging user relations.


The work \cite{lykouris2018stochastic} first studies stochastic bandits with adversarial corruption, where the rewards are corrupted with the sum of corruption magnitudes in all rounds constrained by the \textit{corruption level} $C$. 
They propose a robust elimination-based algorithm.
The paper \cite{gupta2019better} proposes an improved algorithm with a tighter regret bound. 
The paper \cite{li2019stochastic} first studies stochastic linear bandits with adversarial corruptions. To tackle the contextual linear bandit setting where the arm set changes over time, the work \cite{ding2022robust} proposes a variant of the OFUL \cite{abbasi2011improved} that achieves a sub-linear regret. A recent work \cite{he2022nearly} proposes the CW-OFUL algorithm that achieves a nearly optimal regret bound. 
All these works focus on designing robust bandit algorithms for a single user; none consider how to robustly learn and leverage the implicit relations among potentially corrupted users for more efficient learning. Moreover, none of them consider how to online detect corrupted users in the multiple-user case.

Some works study how to leverage user relations to accelerate the bandit learning process in the multiple-user case. 
The work \cite{wu2016contextual} utilizes a \textit{known} user adjacency graph to share context and payoffs among neighbors.
To adaptively learn and utilize \textit{unknown} user relations, the paper \cite{gentile2014online} proposes the clustering of bandits (CB) problem where there is an \textit{unknown} user clustering structure to be learned by the agent. The work \cite{li2016collaborative} uses collaborative effects on items to guide the clustering of users. The paper \cite{li2018online} studies the CB problem in the cascading bandit setting. The work \cite{li2019improved} considers the setting where users in the same cluster share both the same preference and the same arrival rate. The paper \cite{liu2022federated} studies the federated CB problem, considering privacy and communication issues. All these works only consider utilizing the relations among normal users; none of them consider
how to robustly learn the user relations from potentially disrupted behaviors, thus would easily be misled by corrupted users. Also, none of them consider how to detect corrupted users from bandit feedback.

To the best of our knowledge, this is the first work to study the problem to (i) 
learn the unknown user relations and preferences from potentially corrupted feedback, and leverage the learned relations to speed up learning;
(ii) adaptively detect the corrupted users online from bandit feedback.

\section{Problem Setup}
\label{setup}


\begin{wrapfigure}{R}{7.6cm}

   \centering
\includegraphics[width=7.6cm]{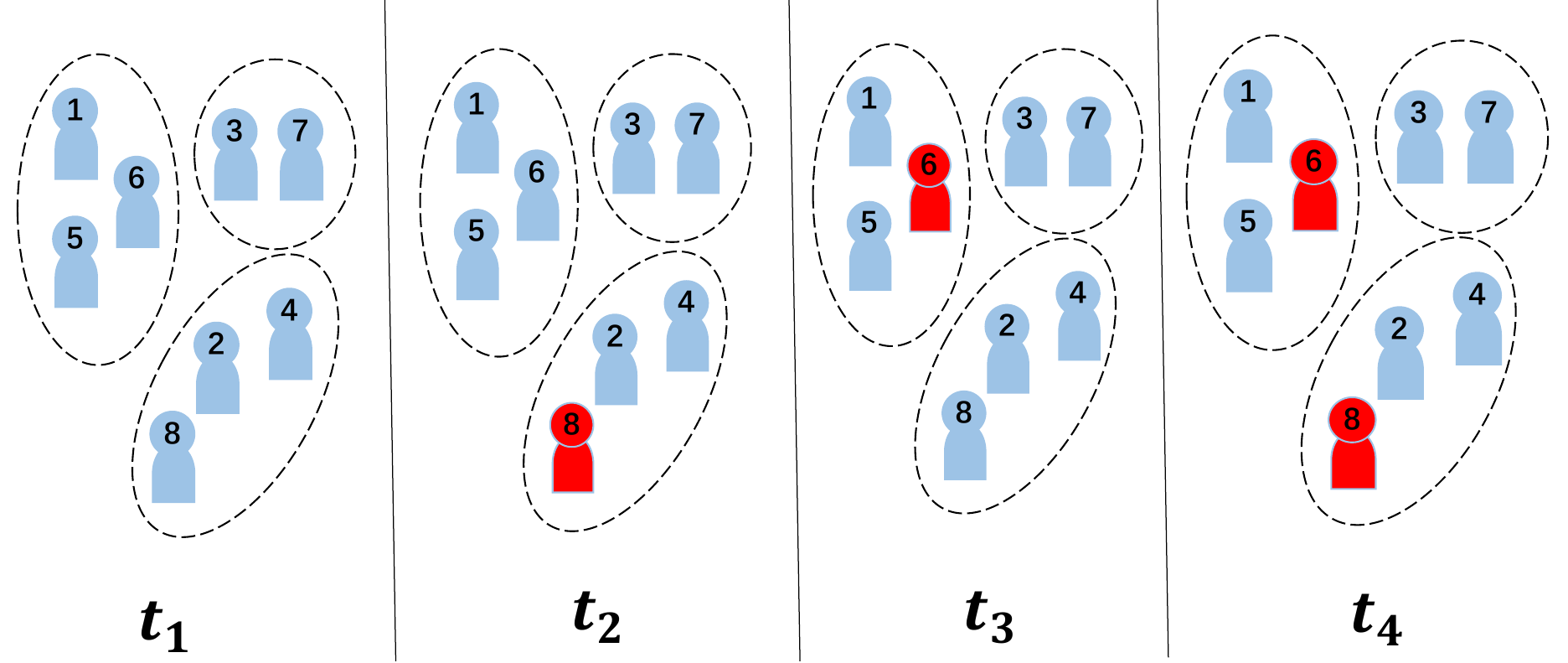}
\caption{Illustration of LOCUD. The \textit{unknown} user relations are represented by dotted circles, \emph{e.g.}, user 3, 7 have similar preferences and thus can be in the same user segment (\emph{i.e.}, cluster). Users 6 and 8 are corrupted users with dynamic behaviors over time (\emph{e.g.}, for user 8, the behaviors are normal at $t_1$ and $t_3$ (blue), but are adversarially corrupted at $t_2$ and $t_4$ (red)\cite{lykouris2018stochastic,he2022nearly}), making them hard to be detected online. The agent needs to learn user relations to utilize information among similar users to speed up learning, and detect corrupted users 6, 8 online from bandit feedback.}
   \label{fig: use case}
  
\end{wrapfigure}

This section formulates the problem of “\textit{Learning and Online Corrupted Users Detection from bandit feedback}” (LOCUD) (illustrated in Fig.\ref{fig: use case}).  
We denote $\norm{\boldsymbol{x}}_{\boldsymbol{M}}=\sqrt{\boldsymbol{x}^{\top}\boldsymbol{M}\boldsymbol{x}}$, $[m]=\{1,\ldots,m\}$, number of elements in set $\mathcal{A}$ as $\left|\mathcal{A}\right|$.

In LOCUD, there are $u$ users, which we denote by set $\mathcal{U}=\{1,2,\ldots,u\}$. Some of them are corrupted users, denoted by set $\tilde{\mathcal{U}}\subseteq\mathcal{U}$. These corrupted users, on the one hand, try to mimic normal users to make themselves hard to detect; on the other hand, they can occasionally perform corrupted behaviors to fool the agent into making sub-optimal decisions. Each user $i\in \mathcal{U}$, no matter a normal one or corrupted one, is associated with a (possibly mimicked for corrupted users) preference feature vector $\btheta_i\in\RR^d$ that is \textit{unknown} and $\norm{\btheta_i}_2\leq1$. There is an underlying clustering structure among all the users representing the similarity of their preferences, but it is \textit{unknown} to the agent and needs to be learned via interactions. Specifically, the set of users $\mathcal{U}$ can be partitioned into $m$ ($m\ll u$) clusters, $V_1, V_2,\ldots V_m$, where $\cup_{j \in [m]}V_j=\mathcal{U},$ and $V_j \cap V_{j'}=\emptyset,$ for $j\neq j'$. Users in the same cluster have the same preference feature vector, while users in different clusters have different preference vectors. We use $\btheta^j$ to denote the common preference vector shared by users in the $j$-th cluster $V_j$, and use $j(i)$ to denote the index of cluster user $i$ belongs to (\emph{i.e.}, $i\in V_{j(i)}$). For any two users $k,i\in\mathcal{U}$, if $k\in V_{j(i)}$, then $\btheta_k=\btheta^{j(i)}=\btheta_i$; otherwise $\btheta_k\neq\btheta_i$. We assume the arm set $\mathcal{A}\subseteq \RR^d$ is finite. Each arm $a\in\mathcal{A}$ is associated with a feature vector $\bx_a\in\RR^d$ with $\norm{\bx_a}_2\leq1$.

The learning process of the agent is as follows. At each round $t\in[T]$, a user $i_t\in\mathcal{U}$ comes to be served, and the learning agent receives a set of arms $\mathcal{A}_t\subseteq\mathcal{A}$ to choose from. The agent infers the cluster $V_t$ that user $i_t$ belongs to based on the interaction history, and recommends an arm $a_t\in\mathcal{A}_t$ according to the aggregated information gathered in the cluster $V_t$. After receiving the recommended arm $a_t$, a normal user $i_t$ will give a random reward with expectation $\bx_{a_t}^\top\btheta_{i_t}$ to the agent. 

To model the behaviors of corrupted users, following \cite{lykouris2018stochastic,gupta2019better,ding2022robust,he2022nearly}, we assume that they can occasionally corrupt the rewards to mislead the agent into recommending sub-optimal arms. Specifically, at each round $t$, if the current served user is a corrupted user (i.e., $i_t\in\Tilde{\mathcal{U}}$), the user can corrupt the reward by $c_t$. 
In summary, we model the reward received by the agent at round $t$ as
\begin{equation*}
    r_t=\bx_{a_t}^\top\btheta_{i_t}+\eta_t+c_t\,,
\end{equation*}
where $c_t=0$ if $i_t$ is a normal user, (\emph{i.e.}, $i_t\notin \Tilde{\mathcal{U}}$), and $\eta_t$ is 1-sub-Gaussian random noise.

As the number of corrupted users is usually small, and they only corrupt the rewards occasionally with small magnitudes to make themselves hard to detect, we assume the sum of corruption magnitudes in all rounds is upper bounded by the \textit{corruption level} $C$, \emph{i.e.}, $\sum_{t=1}^T \left|c_t\right|\leq C$ \cite{lykouris2018stochastic,gupta2019better,ding2022robust,he2022nearly}. 

We assume the clusters, users, and items satisfy the following assumptions. Note that all these assumptions basically follow the settings from classical works on clustering of bandits \cite{gentile2014online,li2018online,liu2022federated,wang2023online}.

\begin{assumption}[Gap between different clusters]
\label{assumption1}
The gap between any two preference vectors for different clusters is at least an \textit{unknown} positive constant $\gamma$
\begin{equation*}
    \norm{\btheta^{j}-\btheta^{j^{\prime}}}_2\geq \gamma>0\,, \forall{j,j^{\prime}\in [m]\,, j\neq j^{\prime}}\,.
\end{equation*}
\end{assumption}

\begin{assumption}[Uniform arrival of users]
\label{assumption2}
At each round $t$, a user $i_t$ comes uniformly at random from $\mathcal{U}$ with probability $1/u$, independent of the past rounds.
\end{assumption}

\begin{assumption}[Item regularity]
\label{assumption3}
At each round $t$, the feature vector $\bx_a$ of each arm $a\in \mathcal{A}_t$ is drawn independently from a fixed \textit{unknown} distribution $\rho$ over $\{\bx\in\RR^d:\norm{\bx}_2\leq1\}$, where $\EE_{\bx\sim \rho}[\bx \bx^{\top}]$'s minimal eigenvalue $\lambda_x > 0$. At $\forall t$, for any fixed unit vector $\boldsymbol{z} \in \RR^d$, $(\btheta^{\top}\boldsymbol{z})^2$ has sub-Gaussian tail with variance no greater than $\sigma^2$.
\end{assumption}

Let $a_t^*\in{\arg\max}_{a\in{\mathcal{A}_t}}\bx_a^{\top}\btheta_{i_t}$ denote an optimal arm with the highest expected reward at round $t$. One objective of the learning agent is to minimize the expected cumulative regret
\begin{equation}\textstyle
R(T)=\EE[\sum_{t=1}^T(\bx_{a_t^*}^{\top}\btheta_{i_t}-\bx_{a_t}^{\top}\btheta_{i_t})]\,.
\label{regret def}
\end{equation}
Another objective is to detect corrupted users online accurately. Specifically, at round $t$, the agent will give a set of users $\Tilde{\mathcal{U}}_t$ as the detected corrupted users, and we want $\Tilde{\mathcal{U}}_t$ to be as close to the ground-truth set of corrupted users $\Tilde{\mathcal{U}}$ as possible.
\section{Algorithms}

This section introduces our algorithms RCLUB-WCU (Algo.\ref{club-rac}) and OCCUD (Algo.\ref{occud}). RCLUB-WCU robustly learns the unknown user clustering structure and preferences from corrupted feedback, and leverages the cluster-based information to accelerate learning. Based on the clustering structure learned by RCLUB-WCU, OCCUD can accurately detect corrupted users online.
\begin{algorithm}[tbh!]
    \caption{RCLUB-WCU}
    \label{club-rac}
\resizebox{0.956\columnwidth}{!}{
\begin{minipage}{\columnwidth}
\begin{algorithmic}[1]
    \STATE{{\textbf{Input:}} Regularization parameter $\lambda$, confidence radius parameter $\beta$, threshold parameter $\alpha$, edge deletion parameter $\alpha_1$}, $f(T)=\sqrt{{(1 + \ln(1+T))}/{(1 + T)}}$.
    \STATE {{\textbf{Initialization:}}\label{complete graph}
$\boldsymbol{{M}}_{i,0} = \boldsymbol{0}_{d\times d}, \boldsymbol{{b}}_{i,0} = \boldsymbol{0}_{d \times 1},$
    $ \tilde{\boldsymbol{{M}}}_{i,0} = \boldsymbol{0}_{d\times d}, \tilde{\boldsymbol{{b}}}_{i,0} = \boldsymbol{0}_{d \times 1}, T_{i,0}=0$ , $\forall{i \in \mathcal{U}}$;\\
    A complete graph $G_0 = (\mathcal{U},E_0)$ over $\mathcal{U}$.
  }
    \FORALL{$t=1,2,\ldots, T$}
        \STATE{Receive the index of the current served user $i_{t} \in \mathcal{U}$, get the feasible arm set at this round $\mathcal{A}_t$\label{alg:club-rac:receive user index and arms}}.
        \STATE{Determine the connected components ${V}_{t}$ in the current maintained graph $G_{t-1}=(\mathcal{U},E_{t-1})$ such that $i_{t} \in {V}_{t}$.\label{find V_t}}
        \STATE{Calculate the robustly estimated statistics for 
        the cluster $V_t$:\\
        $\boldsymbol{M}_{{V}_{t},t-1} = \lambda\boldsymbol{I}+ \sum_{i \in {V}_{t}}\boldsymbol{M}_{i, t-1}\,,
        \boldsymbol{b}_{{V}_{t},t-1} = \sum_{i \in 
         {V}_{t}}\boldsymbol{b}_{i, t-1}\,,
        \hat{\boldsymbol{\theta}}_{{V}_{t},t-1} = \boldsymbol{M}_{{V}_{t},t-1}^{-1}\boldsymbol{b}_{{V}_{t},t-1}\,;$}
        \STATE {\label{recommend arm}Select an arm $a_t$ with largest UCB index in Eq.(\ref{UCB})
and receive the corresponding reward $r_t$;} 
\STATE  {Update the statistics for robust estimation of user $i_t$:\\
$\boldsymbol{M}_{i_t,t} = \boldsymbol{M}_{i_t,t-1} + w_{i_{t}, t-1}\boldsymbol{x}_{a_t}\boldsymbol{x}_{a_t}^{\top}\,,
\boldsymbol{b}_{i_t,t} =\boldsymbol{b}_{i_t,t-1} + w_{i_{t}, t-1}r_t\boldsymbol{x}_{a_t}\,,
T_{i_t,t} = T_{i_t,t-1} + 1\,,$\\
$\boldsymbol{M}_{i_t,t}^{\prime}=\lambda \boldsymbol{I}+\boldsymbol{M}_{i_t,t}$, $\hat{\boldsymbol{\theta}}_{i_t,t} = {\boldsymbol{M}_{i_t,t}^{\prime-1}}\boldsymbol{b}_{i_t,t}\,,
w_{i_{t}, t} = \min\{1, \alpha/{{\norm{\boldsymbol{x}_{a_t}}_{{\boldsymbol{M}_{i_t,t}^{\prime-1}}}}}\}\,;$\label{alg: update1}}

\STATE {Keep robust estimation statistics of other users unchanged\label{alg: update2}:\\
$
\boldsymbol{M}_{\ell,t} = \boldsymbol{M}_{\ell,t-1}, \boldsymbol{b}_{\ell,t} = \boldsymbol{b}_{\ell,t-1}, T_{\ell,t} = T_{\ell,t-1}     
$\,, $\hat{\boldsymbol{\theta}}_{\ell,t} = \hat{\boldsymbol{\theta}}_{\ell,t-1}$, for all $\ell\in\mathcal{U}, \ell \ne i_t$;}

\STATE {\label{alg:delete} Delete the edge $(i_t, \ell) \in E_{t-1}$, if

\begin{equation*}
    \norm{\hat{\boldsymbol{\theta}}_{i_t,t} - \hat{\boldsymbol{\theta}}_{\ell,t}}_2 \ge \alpha_1\big(f(T_{i_t,t})+f(T_{\ell,t}) + \alpha C\big)\,,
\end{equation*}
and get an updated graph $G_t = (\mathcal{U}, E_t)$;} 
\STATE {\label{use occud line} Use the OCCUD Algorithm (Algo.\ref{occud}) to detect the corrupted users.}
\ENDFOR
\end{algorithmic}
\end{minipage}
}
\end{algorithm}
\subsection{RCLUB-WCU}\label{section: rclub-wcu}

The corrupted behaviors may cause inaccurate preference estimations, leading to erroneous relation
inference and sub-optimal decisions. In this case, how to learn and utilize unknown user relations to accelerate learning becomes non-trivial. Motivated by this, we design RCLUB-WCU as follows.


\noindent\textbf{Assign the inferred cluster $V_t$ for user $i_t$.} RCLUB-WCU maintains a dynamic undirected graph $G_t=(\mathcal{U}, E_t)$ over users, which is initialized to be a complete graph (Algo.\ref{club-rac} Line \ref{complete graph}). Users with similar learned preferences will be connected with edges in $E_t$. The connected components in the graph represent the inferred clusters by the algorithm. At round $t$, user $i_t$ comes to be served with a feasible arm set $\mathcal{A}_t$ for the agent to choose from (Line \ref{alg:club-rac:receive user index and arms}). In Line \ref{find V_t}, RCLUB-WCU  detects the connected component $V_t$ in the graph containing user $i_t$ to be the current inferred cluster for $i_t$.

\noindent\textbf{Robust preference estimation of cluster $V_t$.} After determining the cluster $V_t$, RCLUB-WCU estimates the common preferences for users in $V_t$ using the historical feedback of all users in $V_t$ and recommends an arm accordingly.
The corrupted behaviors could cause inaccurate preference estimates, which can easily mislead the agent. To address this, inspired by \cite{zhao2021linear,he2022nearly}, we use weighted ridge regression to make corruption-robust estimations. Specifically, RCLUB-WCU robustly estimates the common preference vector of cluster $V_t$ by solving the following weighted ridge regression
\begin{equation}
    \textstyle{\hat{\btheta}_{{V}_t,t-1}=\mathop{\arg\min}\limits_{\btheta\in\RR^d}\sum_{s\in[t-1]\atop i_s\in {V}_t}w_{i_s,s}(r_s-\bx_{a_s}^{\top}\btheta)^2+\lambda\norm{\btheta}_2^2\,,}
    \label{optimization problem for cluster}
\end{equation}
where $\lambda>0$ is a regularization coefficient. Its closed-form solution is
    $\hat{\btheta}_{{V}_t,t-1}={\bM}_{{V}_t,t-1}^{-1}{\bb}_{{V}_t,t-1}\,,$
where
    ${\bM}_{{V}_t,t-1}=\lambda\bI+\sum_{s\in[t-1]\atop i_s\in {V}_t}w_{i_s,s}\bx_{a_s}\bx_{a_s}^{\top}$,
   ${\bb}_{{V}_t,t-1}=\sum_{s\in[t-1]\atop i_s\in {V}_t}w_{i_s,s}r_{a_s}\bx_{a_s}.$

We set the weight of sample for user $i_s$ in $V_t$ at round $s$ as $w_{i_s,s}=\min\{1,\alpha/\norm{\bx_{a_s}}_{M_{i_s,s}^{\prime-1}}\}$, where $\alpha$ is a coefficient to be determined later. The intuitions of designing these weights are as follows. The term $\norm{\bx_{a_s}}_{M_{i_s,s}^{\prime-1}}$ is the confidence radius of arm $a_s$ for user $i_s$ at $s$, reflecting how confident the algorithm is about the estimation of $i_s$'s preference on $a_s$ at $s$. If $\norm{\bx_{a_s}}_{M_{i_s,s}^{\prime-1}}$ is large, it means the agent is uncertain of user $i_s$'s preference on $a_s$, indicating this sample is probably corrupted. Therefore, we use the inverse of confidence radius to assign a small weight to this round's sample if it is potentially corrupted. In this way, uncertain information for users in cluster $V_t$ is assigned with less importance when estimating the $V_t$'s preference vector, which could help relieve the estimation inaccuracy caused by corruption. For technical details, please refer to Section \ref{section: theory rclub-wcu} and Appendix.

\noindent\textbf{Recommend $a_t$ with estimated preference of cluster $V_t$.} Based on the corruption-robust preference estimation $\hat{\btheta}_{{V}_t,t-1}$ of cluster $V_t$, in Line \ref{recommend arm}, the agent recommends an arm using the upper confidence bound (UCB)
strategy to balance exploration and exploitation
\begin{equation}
\label{UCB}
    a_t= \argmax_{a\in \mathcal{A}_t} \bx_a^{\top}\hat{\btheta}_{{V}_t,t-1}
    + \beta \norm{\bx_a}_{{\bM}_{{V}_t,t-1}^{-1}
    }\triangleq {\hat{R}_{a,t}}+{C_{a,t}}\,,
\end{equation}
where $\beta=\sqrt{\lambda}+\sqrt{2\log(\frac{1}{\delta})+d\log(1+\frac{T}{\lambda d})}+\alpha C$ is the confidence radius parameter,  $\hat{R}_{a,t}$ denotes the estimated reward of arm $a$ at $t$, $C_{a,t}$ denotes the confidence radius of arm $a$ at $t$. The design of $C_{a,t}$ theoretically relies on Lemma \ref{concentration bound} that will be given in Section \ref{section: theory}.

\noindent\textbf{Update the robust estimation of user $i_t$.}
After receiving $r_t$, the algorithm updates the estimation statistics of user $i_t$, while keeping the statistics of others unchanged (Line \ref{alg: update1} and Line \ref{alg: update2}). Specifically, RCLUB-WCU estimates the preference vector of user $i_t$ by solving a weighted ridge regression
\begin{equation}
\textstyle{\hat{\btheta}_{i_t,t}=\mathop{\arg\min}\limits_{\btheta\in\RR^d}\sum_{s\in[t]\atop i_s=i_t}w_{i_s,s}(r_s-\bx_{a_s}^{\top}\btheta)^2+\lambda\norm{\btheta}_2^2}
\end{equation}
with closed-form solution
       $\hat{\btheta}_{i_t,t}={(\lambda\bI+\bM_{i_t,t})}^{-1}\bb_{i_t,t}\,,$
where 
    $\bM_{i_t,t}=\sum_{s\in[t]\atop i_s=i_t}w_{i_s,s}\bx_{a_s}\bx_{a_s}^{\top}$,
    $ \bb_{i_t,t}=\sum_{s\in[t]\atop i_s=i_t}w_{i_s,s}r_{a_s}\bx_{a_s}\,,$
and we design the weights in the same way by the same reasoning. 
\begin{algorithm}[tbh!]
    \caption{OCCUD (At round $t$, used in Line \ref{use occud line} in Algo.\ref{club-rac})}
    \label{occud}
\resizebox{0.956\columnwidth}{!}{
\begin{minipage}{\columnwidth}
\begin{algorithmic}[1]
\STATE{Initialize $\tilde{\mathcal{U}}_t=\emptyset$; input probability parameter $\delta$.}
\STATE  {Update the statistics for non-robust estimation of user $i_t$\\
$\tilde{\boldsymbol{M}}_{i_t,t} = \Tilde{\boldsymbol{M}}_{i_t,t-1} + \boldsymbol{x}_{a_t}\boldsymbol{x}_{a_t}^{\top}$\,, 
$\tilde{\boldsymbol{b}}_{i_t,t} =\tilde{\boldsymbol{b}}_{i_t,t-1} + r_t\boldsymbol{x}_{a_t}\,,
\tilde{\boldsymbol{\theta}}_{i_t,t}= (\lambda \boldsymbol{I}+\tilde{\boldsymbol{M}}_{i_t,t})^{-1} \tilde{\boldsymbol{b}}_{i_t,t}$\,,\label{occud: update1}}

\STATE {Keep non-robust estimation statistics of other users unchanged\label{occud: update2} 

$\tilde{\boldsymbol{M}}_{\ell,t} = \tilde{\boldsymbol{M}}_{\ell,t-1}, \tilde{\boldsymbol{b}}_{\ell,t} = \tilde{\boldsymbol{b}}_{\ell,t-1}, \tilde{\boldsymbol{\theta}}_{\ell,t} = \tilde{\boldsymbol{\theta}}_{\ell,t-1}$, for all $\ell\in\mathcal{U}, \ell \ne i_t$\,.}
   \FORALL{connected component $V_{j,t} \in G_{t}$}
   \STATE{Calculate the robust estimation statistics for the cluster $V_{j,t}$:\\
  $\boldsymbol{M}_{{V}_{j,t},t} = \lambda\boldsymbol{I}+ \sum_{\ell \in {V}_{j,t}}\boldsymbol{M}_{\ell, t}\,, T_{V_{j,t},t}=\sum_{\ell\in V_{j,t}} T_{\ell,t}\,, $\\
  $\boldsymbol{b}_{{V}_{j,t},t} = \sum_{\ell \in 
         {V}_{j,t}}\boldsymbol{b}_{\ell, t}\,,
        \hat{\boldsymbol{\theta}}_{{V}_{j,t},t} = \boldsymbol{M}_{{V}_{j,t},t}^{-1}\boldsymbol{b}_{{V}_{j,t},t}\,;$
   \label{occud: cluster estimation}}
    \FORALL{user $i\in V_{j,t}$}
   \STATE{Detect user $i$ to be a corrupted user and add user $i$ to the set $\tilde{\mathcal{U}}_t$ if the following holds:
        \begin{align}
       \begin{footnotesize}
         \norm{\Tilde{\btheta}_{i,t}-\hat{\btheta}_{V_{i,t},t}}_2>\frac{\sqrt{d\log(1+\frac{T_{i,t}}{\lambda d})+2\log(\frac{1}{\delta})}+\sqrt{\lambda}}{\sqrt{\lambda_{\text{min}}(\tilde{\boldsymbol{M}}_{i,t})+\lambda}}+\frac{\sqrt{d\log(1+\frac{T_{V_{i,t},t}}{\lambda d})+2\log(\frac{1}{\delta})}+\sqrt{\lambda}+\alpha C}{\sqrt{\lambda_{\text{min}}(\boldsymbol{M}_{{V}_{i,t},t})}}\,,   
       \end{footnotesize}
   \end{align}  

   where $\lambda_{\text{min}}(\cdot)$ denotes the minimum eigenvalue of the matrix argument. \label{detect line}
   }
\ENDFOR
\ENDFOR
\end{algorithmic}
\end{minipage}
}
\end{algorithm}
\noindent\textbf{Update the dynamic graph.} Finally, with the updated statistics of user $i_t$, RCLUB-WCU checks whether the inferred $i_t$'s preference similarities with other users are still true, and updates the graph accordingly. Precisely, if gap between the updated estimation $\hat{\btheta}_{i_t,t}$ of $i_t$ and the estimation $\hat{\btheta}_{\ell,t}$ of user $\ell$ exceeds a threshold in Line \ref{alg:delete}, RCLUB-WCU will delete the edge $(i_t,\ell)$ in $G_{t-1}$ to split them apart. The threshold is carefully designed to handle the estimation uncertainty from both stochastic noises and potential corruptions. The updated graph $G_t=(\mathcal{U},E_t)$ will be used in the next round.

\subsection{OCCUD}
Based on the inferred clustering structure of RCLUB-WCU, we devise a novel online detection algorithm OCCUD (Algo.\ref{occud}). The design ideas and process of OCCUD are as follows.

Besides the robust preference estimations (with weighted regression) of users and clusters kept by RCLUB-WCU, OCCUD also maintains the non-robust estimations for each user by online ridge regression without weights (Line \ref{occud: update1} and Line \ref{occud: update2}). Specifically, at round $t$, OCCUD updates the non-robust estimation of user $i_t$ by solving the following online ridge regression:
\begin{equation}
\textstyle{\tilde{\btheta}_{i_t,t}=\mathop{\arg\min}\limits_{\btheta\in\RR^d}\sum_{s\in[t]\atop i_s=i_t}(r_s-\bx_{a_s}^{\top}\btheta)^2+\lambda\norm{\btheta}_2^2\,,}
\end{equation}
with solution
\begin{small}
    $\tilde{\btheta}_{i_t,t}={(\lambda\bI+\tilde{\bM}_{i_t,t})}^{-1}\tilde{\bb}_{i_t,t}\,,$
where 
$\tilde{\bM}_{i_t,t}=\sum_{s\in[t]\atop i_s=i_t}\bx_{a_s}\bx_{a_s}^{\top}\,, \tilde{\bb}_{i_t,t}=\sum_{s\in[t]\atop i_s=i_t}r_{a_s}\bx_{a_s}\,.$
\end{small}
With the robust and non-robust preference estimations, OCCUD does the following to detect corrupted users based on the clustering structure inferred by RCLUB-WCU. 
First, OCCUD finds the connected components in the graph kept by RCLUB-WCU, which represent the inferred clusters. Then, for each inferred cluster $V_{j,t}\in G_t$: (1) OCCUD computes its robustly estimated preferences vector $\hat{\btheta}_{V_{i,t},t}$ (Line \ref{occud: cluster estimation}). (2) For each user $i$ whose inferred cluster is $V_{j,t}$ (\emph{i.e.,}$i\in V_{j,t}$), OCCUD computes the gap between user $i$'s non-robustly estimated preference vector $\Tilde{\btheta}_{i,t}$ and the robust estimation $\hat{\btheta}_{V_{i,t},t}$ for user $i$'s inferred cluster $V_{j,t}$. If the gap exceeds a carefully-designed threshold, OCCUD will detect user $i$ as corrupted and add $i$ to the detected corrupted user set $\Tilde{\mathcal{U}}_t$ (Line \ref{detect line}).

\begin{figure*}[htp]
\resizebox{0.96\columnwidth}{!}{
\begin{minipage}{\columnwidth}
    \subfigure[RCLUB-WCU]{
    \includegraphics[scale=0.2]{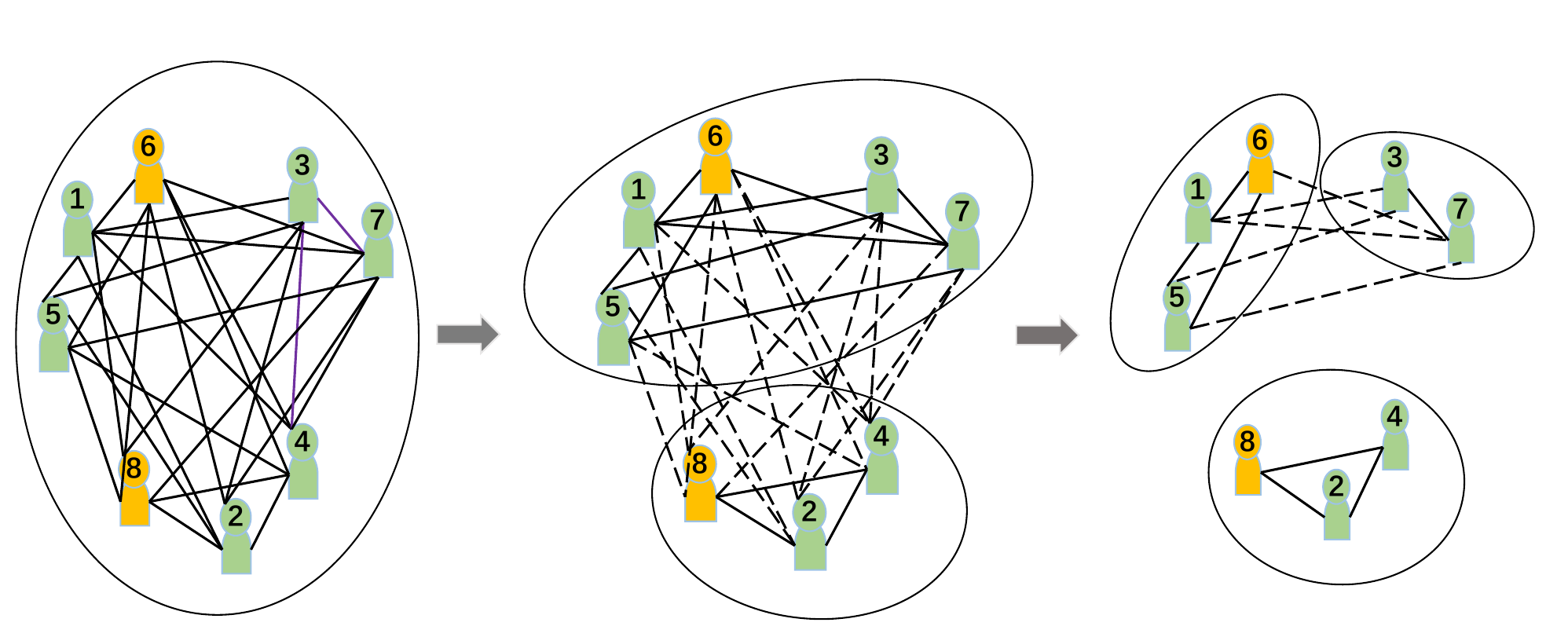}
    }
    \vrule
    \subfigure[OCCUD]{
    \includegraphics[scale=0.2]{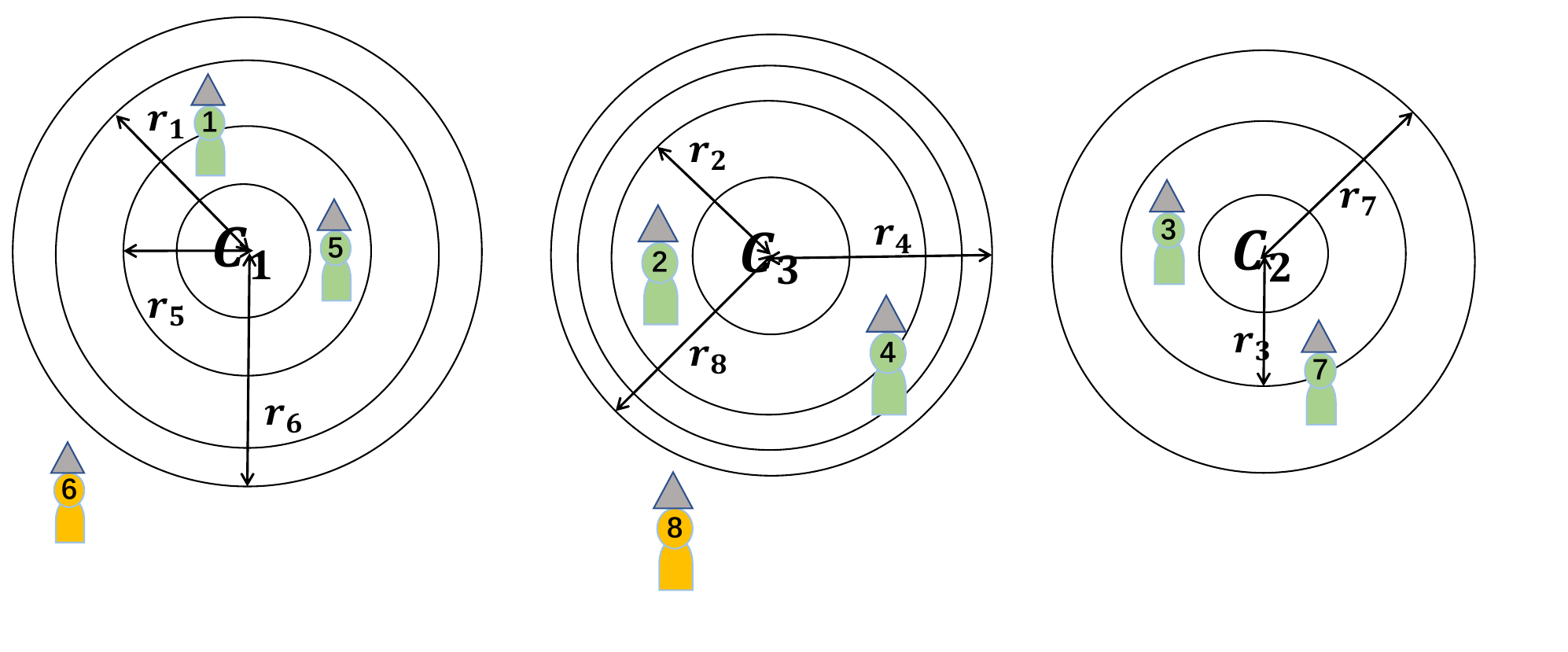}
    }
    \caption{Algorithm illustrations. Users 6 and 8 are corrupted users (orange), and the others are normal (green). (a) illustrates RCLUB-WCU, which starts with a complete user graph, and adaptively deletes edges between users (dashed lines) with dissimilar robustly learned preferences. The corrupted behaviors of users 6 and 8 may cause inaccurate preference estimations, leading to erroneous relation inference. In this case, how to delete edges correctly is non-trivial, and RCLUB-WCU addresses this challenge (detailed in Section \ref{section: rclub-wcu}).
        (b) illustrates OCCUD at some round $t$, where person icons with triangle hats represent the non-robust user preference estimations. 
        The gap between the non-robust estimation of user 6 and the robust estimation of user 6's inferred cluster (circle $C_1$) exceeds the threshold $r_6$ at this round (Line \ref{detect line} in Algo.\ref{occud}), so OCCUD detects user 6 to be corrupted.}
    \label{fig: algorithm illustration} \end{minipage}}
\end{figure*}
The intuitions of OCCUD are as follows. On the one hand, after some interactions, RCLUB-WCU will infer the user clustering structure accurately. Thus, at round $t$, the robust estimation $\hat{\btheta}_{V_{i,t},t}$ for user $i$'s inferred cluster should be pretty close to user $i$'s ground-truth preference vector $\btheta_i$. On the other hand, since the feedback of normal users are always regular, at round $t$, if user $i$ is a normal user, the non-robust estimation $\Tilde{\btheta}_{i,t}$ should also be close to the ground-truth $\btheta_i$. However, the non-robust estimation of a corrupted user should be quite far from the ground truth due to corruptions. Based on this reasoning, OCCUD compares each user's non-robust estimation and the robust estimation of the user's inferred cluster to detect the corrupted users. For technical details, please refer to Section \ref{section occud theory} and Appendix.
Simple illustrations of our proposed algorithms can be found in Fig.\ref{fig: algorithm illustration}.

\section{Theoretical Analysis}
\label{section: theory}

In this section, we theoretically analyze the performances of our proposed algorithms, RCLUB-WCU and OCCUD. 
Due to the page limit, we put the proofs in the Appendix.

\subsection{Regret Analysis of RCLUB-WCU}
\label{section: theory rclub-wcu}
This section gives an upper bound of the expected regret (defined in Eq.(\ref{regret def})) for RCLUB-WCU. 

The following lemma provides a sufficient time $T_0(\delta)$, after which RCLUB-WCU can cluster all the users correctly with high probability.
\begin{lemma}
\label{sufficient time}
With probability at least $1-3\delta$, RCLUB-WCU will cluster all the users correctly after 
\begin{small}
 \begin{equation*}
    \begin{aligned}
        T_0(\delta) &\triangleq 16u\log(\frac{u}{\delta})+4u\max\{\frac{288d}{\gamma^2\alpha\sqrt{\lambda}\tilde{\lambda}_x}\log(\frac{u}{\delta}), \frac{16}{\tilde{\lambda}_x^2}\log(\frac{8d}{\tilde{\lambda}_x^2\delta}),\frac{72\sqrt{\lambda}}{\alpha\gamma^2\tilde{\lambda}_x},\frac{72\alpha C^2}{\gamma^2\sqrt{\lambda}\tilde{\lambda}_x}\}\,
    \end{aligned}
\end{equation*}   
\end{small}
for some $\delta\in(0,\frac{1}{3})$, where $\tilde{\lambda}_x\triangleq\int_{0}^{\lambda_x} (1-e^{-\frac{(\lambda_x-x)^2}{2\sigma^2}})^{K} dx$, $\left|\mathcal{A}_t\right|\leq K,\forall{t\in[T]}$.
\end{lemma}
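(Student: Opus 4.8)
\textbf{Proof strategy for Lemma~\ref{sufficient time}.}

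The plan is to show that once every user has been served enough times, two things hold simultaneously with high probability: (i) each individual user's \emph{robust} preference estimate $\hat\btheta_{i,t}$ is within $\gamma/4$ (say) of its true cluster center $\btheta^{j(i)}$, and (ii) the edge-deletion rule in Line~\ref{alg:delete} then correctly severs all edges between users in different clusters while never cutting an edge between users in the same cluster. Combining these gives that the connected components of $G_t$ coincide exactly with the true clusters $V_1,\dots,V_m$.

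First I would establish a per-user concentration bound. Following the weighted-ridge-regression analysis of \cite{he2022nearly,zhao2021linear}, since the weights satisfy $w_{i_s,s}\norm{\bx_{a_s}}_{\bM_{i_s,s}^{\prime-1}}\le\alpha$, the total weighted corruption contribution to user $i$'s estimate is controlled by $\alpha C$, and one gets a bound of the rough form
\begin{equation*}
\norm{\hat\btheta_{i,t}-\btheta_i}_{\bM'_{i,t}}\le \sqrt{\lambda}+\sqrt{2\log(1/\delta)+d\log(1+T/(\lambda d))}+\alpha C\,.
\end{equation*}
To convert this weighted-norm bound into an $\ell_2$ bound I need a lower bound on $\lambda_{\min}(\bM'_{i,t})$. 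This is where Assumption~\ref{assumption3} (item regularity, minimal eigenvalue $\lambda_x>0$ of $\EE[\bx\bx^\top]$) and the definition of $\tilde\lambda_x$ enter: I would argue that after $T_{i,t}$ plays, with the weights bounded below (the $\min\{1,\cdot\}$ truncation means a weight is $1$ whenever the confidence radius is small, which happens once enough well-conditioned samples accumulate), a matrix Chernoff / Freedman-type argument gives $\lambda_{\min}(\bM'_{i,t})\gtrsim \alpha\sqrt{\lambda}\,\tilde\lambda_x\, T_{i,t}$ up to an additive $O(\log(d/\delta))$ burn-in; the quantity $\tilde\lambda_x=\int_0^{\lambda_x}(1-e^{-(\lambda_x-x)^2/2\sigma^2})^K dx$ is precisely the effective minimal eigenvalue after accounting for the sub-Gaussian fluctuation of $(\btheta^\top\bz)^2$ over a set of at most $K$ arms. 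Dividing the weighted bound by $\sqrt{\lambda_{\min}(\bM'_{i,t})}$ shows $\norm{\hat\btheta_{i,t}-\btheta_i}_2 = f(T_{i,t})\cdot O(1) + \alpha C/\sqrt{\lambda_{\min}}$, and one checks that each of the four terms in the $\max\{\cdot\}$ inside $T_0(\delta)$ is exactly the threshold on $T_{i,t}$ needed to drive the corresponding piece below $\gamma/4$ (the $288d/(\gamma^2\alpha\sqrt\lambda\tilde\lambda_x)$ term kills the main $f(T_{i,t})$ term, the $16/\tilde\lambda_x^2$ term is the eigenvalue burn-in, the $72\sqrt\lambda/(\alpha\gamma^2\tilde\lambda_x)$ term handles the regularization bias, and the $72\alpha C^2/(\gamma^2\sqrt\lambda\tilde\lambda_x)$ term handles the corruption).

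Next I would handle the translation from ``rounds $t$'' to ``per-user play counts $T_{i,t}$.'' By Assumption~\ref{assumption2} each user arrives with probability $1/u$ i.i.d., so a Chernoff bound on the binomial gives that after $t\ge 16u\log(u/\delta)+4u\cdot(\text{required }T_{i,t})$ rounds, every user has been served at least the required number of times, with probability $\ge 1-\delta$ via a union bound over $u$ users — this is the source of the leading $16u\log(u/\delta)$ and the factor $4u$ in front of the max. Then, conditioned on the concentration event (which costs another $2\delta$, one $\delta$ for the self-normalized martingale bound and one $\delta$ for the eigenvalue bound, union-bounded over users), for users $k,i$ in different clusters the triangle inequality gives $\norm{\hat\btheta_{i,t}-\hat\btheta_{k,t}}_2\ge\gamma-\gamma/4-\gamma/4=\gamma/2$, which I would check exceeds $\alpha_1(f(T_{i,t})+f(T_{k,t})+\alpha C)$ under the same play-count lower bounds (this is why $\alpha_1$ and the thresholds are tuned together), so the edge is deleted; for same-cluster users the gap is $\le\gamma/4+\gamma/4<$ threshold, so the edge survives. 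The total failure probability is $3\delta$ as claimed.

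\textbf{Main obstacle.} The crux is step two: the high-probability lower bound $\lambda_{\min}(\bM'_{i,t})=\Omega(\alpha\sqrt{\lambda}\,\tilde\lambda_x\,T_{i,t})$ for the \emph{weighted} design matrix. The difficulty is that the weights $w_{i_s,s}$ are data-dependent and potentially small, so one cannot directly apply a matrix concentration bound to $\sum w_{i_s,s}\bx_{a_s}\bx_{a_s}^\top$; the key observation to make rigorous is that a weight is truncated below $1$ only when $\norm{\bx_{a_s}}_{\bM_{i_s,s}^{\prime-1}}>\alpha$, i.e.\ only finitely often (an $O(d\log T/\alpha^2)$ potential-function argument bounds the number of such rounds), so on all but a bounded number of a user's rounds the weight is exactly $1$ and standard i.i.d.\ matrix-Chernoff applies to those rounds, with $\tilde\lambda_x$ absorbing the per-round eigenvalue loss from the distribution $\rho$ restricted to the realized arm set. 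Getting the constants in this argument to line up with the stated $288$, $16$, $72$ is the routine-but-tedious part I would relegate to the appendix.
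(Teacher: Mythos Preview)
Your high-level structure matches the paper almost exactly: per-user weighted-ridge concentration $\Rightarrow$ $\ell_2$ bound via $\lambda_{\min}(\bM'_{i,t})$ $\Rightarrow$ split into pieces each driven below $\gamma/12$ (the paper uses $\gamma/12$ for each of three pieces, summing to $\gamma/4$) $\Rightarrow$ uniform-arrival Chernoff to convert per-user counts to rounds $\Rightarrow$ triangle-inequality check on the deletion rule. Your accounting of the four terms in the $\max$, the $16u\log(u/\delta)+4u\cdot\max\{\cdots\}$ shape, and the $3\delta$ budget is all on target.

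Where you diverge from the paper is precisely at what you call the ``main obstacle,'' and here you are overcomplicating things. You propose to control $\lambda_{\min}(\bM'_{i,t})$ by arguing that the weight is truncated below $1$ only $O(d\log T/\alpha^2)$ times via a potential-function argument, then applying matrix Chernoff to the remaining weight-$1$ rounds. The paper instead uses a one-line \emph{deterministic} lower bound on every weight: since $\norm{\bx_{a_s}}_{\bM_{i_s,s}^{\prime-1}}^2\le \lambda_{\max}(\bM_{i_s,s}^{\prime-1})\norm{\bx_{a_s}}_2^2\le 1/\lambda$, one has
\[
w_{i_s,s}=\min\Bigl\{1,\frac{\alpha}{\norm{\bx_{a_s}}_{\bM_{i_s,s}^{\prime-1}}}\Bigr\}\ge \min\{1,\alpha\sqrt{\lambda}\}=\alpha\sqrt{\lambda}
\]
(under the standing assumption $\alpha\sqrt{\lambda}<1$). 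Hence $\bM'_{i,t}\succeq \lambda\bI+\alpha\sqrt{\lambda}\sum_{s:i_s=i}\bx_{a_s}\bx_{a_s}^\top$, and the standard matrix-concentration lemma for the \emph{unweighted} sum (Lemma~7 of \cite{li2018online}, combined with the $\tilde\lambda_x$ computation you correctly identified) immediately gives $\lambda_{\min}(\bM'_{i,t})\ge 2\alpha\sqrt{\lambda}\,\tilde\lambda_x\,T_{i,t}+\lambda$ once $T_{i,t}\ge \frac{16}{\tilde\lambda_x^2}\log\frac{8d}{\tilde\lambda_x^2\delta}$. This dissolves your obstacle entirely and is what produces the exact constants $288,\,16,\,72,\,72$ without any burn-in bookkeeping for ``bad'' rounds. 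Your potential-function route might be made to work, but it is not needed and would make lining up the constants messier.
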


After $T_0(\delta)$, the following lemma gives a bound of the gap between $\hat{\btheta}_{{V}_t,t-1}$ and the ground-truth $\btheta_{i_t}$
in direction of action vector $\bx_a$ for RCLUB-WCU, which supports the design in Eq.(\ref{UCB}).

\begin{lemma}
\label{concentration bound}
With probability at least $1-4\delta$ for some $\delta\in(0,\frac{1}{4})$, $\forall{t\geq T_0(\delta)}$, we have:
\begin{equation*}
\begin{aligned}
        \left|\boldsymbol{x}_{a}^{\mathrm{T}}(\hat{\boldsymbol{\theta}}_{V_{t},t-1} - \boldsymbol{\theta}_{i_t})\right| \le \beta\norm{\boldsymbol{x_{a}}}_{\boldsymbol{M}^{-1}_{V_t, t-1}}\triangleq C_{a,t}\,.
\end{aligned}
\end{equation*}
\end{lemma}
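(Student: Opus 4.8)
The plan is to reduce the statement to a self-normalized concentration bound for a \emph{weighted, corruption-perturbed} linear regression, using two facts: (i) after $T_0(\delta)$ the inferred cluster $V_t$ coincides with a true cluster, so every sample aggregated into $\hat{\btheta}_{V_t,t-1}$ has conditional mean $\bx_{a_s}^{\top}\btheta_{i_t}$; and (ii) the weight design $w_{i_s,s}=\min\{1,\alpha/\norm{\bx_{a_s}}_{\boldsymbol{M}_{i_s,s}^{\prime-1}}\}$ forces each corrupted sample to contribute a bounded amount in the $\bM_{V_t,t-1}^{-1}$-norm.

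First I would condition on the event $\cE_{\mathrm{cl}}$ of Lemma \ref{sufficient time} that all users are clustered correctly for every $t\ge T_0(\delta)$; this holds with probability at least $1-3\delta$. On $\cE_{\mathrm{cl}}$, for $t\ge T_0(\delta)$ the connected component $V_t$ equals $V_{j(i_t)}$, so $\btheta_i=\btheta_{i_t}$ for every $i\in V_t$. Substituting $r_s=\bx_{a_s}^{\top}\btheta_{i_t}+\eta_s+c_s$ into the closed form of $\hat{\btheta}_{V_t,t-1}$ over the aggregated rounds $\{s\in[t-1]:i_s\in V_t\}$ gives
\[
\hat{\btheta}_{V_t,t-1}-\btheta_{i_t}=\bM_{V_t,t-1}^{-1}\Big(\textstyle\sum_{s}w_{i_s,s}\eta_s\bx_{a_s}+\sum_{s}w_{i_s,s}c_s\bx_{a_s}-\lambda\btheta_{i_t}\Big)\,.
\]
By Cauchy--Schwarz, $\abs{\bx_a^{\top}(\hat{\btheta}_{V_t,t-1}-\btheta_{i_t})}\le \norm{\bx_a}_{\bM_{V_t,t-1}^{-1}}\big(\,\norm{\sum_s w_{i_s,s}\eta_s\bx_{a_s}}_{\bM_{V_t,t-1}^{-1}}+\sum_s w_{i_s,s}\abs{c_s}\norm{\bx_{a_s}}_{\bM_{V_t,t-1}^{-1}}+\lambda\norm{\btheta_{i_t}}_{\bM_{V_t,t-1}^{-1}}\big)$, so it suffices to bound the bracket by $\beta$.

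I would control the three terms separately. (a) \emph{Regularization}: since $\bM_{V_t,t-1}\succeq\lambda\bI$ and $\norm{\btheta_{i_t}}_2\le 1$, $\lambda\norm{\btheta_{i_t}}_{\bM_{V_t,t-1}^{-1}}\le\sqrt{\lambda}$. (b) \emph{Noise}: the weights depend only on past arm contexts, not on rewards, hence are predictable; since $w_{i_s,s}\in[0,1]$ and $\eta_s$ is $1$-sub-Gaussian, $w_{i_s,s}\eta_s$ is conditionally $1$-sub-Gaussian, and the vector-valued self-normalized martingale inequality (Abbasi-Yadkori et al.) gives, simultaneously for all $t$ with probability at least $1-\delta$, $\norm{\sum_s w_{i_s,s}\eta_s\bx_{a_s}}_{\bM_{V_t,t-1}^{-1}}\le\sqrt{2\log(1/\delta)+\log(\det\bM_{V_t,t-1}/\det(\lambda\bI))}\le\sqrt{2\log(1/\delta)+d\log(1+T/(\lambda d))}$, the last step using $\norm{\bx_{a_s}}_2\le1$, $w_{i_s,s}\le1$, and a $\trace$/AM--GM bound on $\log\det$. (c) \emph{Corruption} (the crux): for each aggregated $s$ one has $\boldsymbol{M}_{i_s,s}^{\prime}=\lambda\bI+\bM_{i_s,s}\preceq\bM_{V_t,t-1}$ by monotonicity of the accumulated Gram matrices ($i_s\in V_t$, $s\le t-1$), so $\norm{\bx_{a_s}}_{\bM_{V_t,t-1}^{-1}}\le\norm{\bx_{a_s}}_{\boldsymbol{M}_{i_s,s}^{\prime-1}}$, and therefore $w_{i_s,s}\norm{\bx_{a_s}}_{\bM_{V_t,t-1}^{-1}}\le w_{i_s,s}\norm{\bx_{a_s}}_{\boldsymbol{M}_{i_s,s}^{\prime-1}}\le\alpha$ by the definition of $w_{i_s,s}$; hence $\sum_s w_{i_s,s}\abs{c_s}\norm{\bx_{a_s}}_{\bM_{V_t,t-1}^{-1}}\le\alpha\sum_s\abs{c_s}\le\alpha C$. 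Adding (a)+(b)+(c) gives the bracket $\le\sqrt{\lambda}+\sqrt{2\log(1/\delta)+d\log(1+T/(\lambda d))}+\alpha C=\beta$, and a union bound over $\cE_{\mathrm{cl}}$ and the self-normalized event yields probability at least $1-4\delta$.

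I expect the main obstacle to be making the martingale step rigorous despite $V_t$ being chosen \emph{adaptively} from the interaction history: one must either union-bound the self-normalized inequality over the at-most-finitely-many connected components (absorbing the extra $\log$ factor into $\delta$), or argue via a stopping time, together with the observation that on $\cE_{\mathrm{cl}}$ the aggregated index set is exactly $\{s:i_s\in V_{j(i_t)}\}$. A secondary subtlety is verifying genuine predictability of the weights given the self-referential update (the weight $w_{i_s,s}$ being read off $\boldsymbol{M}_{i_s,s}^{\prime}$, which already folds in the current context), so that the conditional sub-Gaussian structure of $w_{i_s,s}\eta_s$ is preserved. The matrix monotonicity, the $\log\det$ estimate, and the $\sqrt{\lambda}$ bound are routine.
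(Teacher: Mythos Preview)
Your proposal is correct and mirrors the paper's proof essentially step for step: the same error decomposition after invoking Lemma~\ref{sufficient time} to identify $V_t$ with the true cluster, the same three-term split (regularization $\le\sqrt{\lambda}$, self-normalized noise via Abbasi-Yadkori, corruption $\le\alpha C$ through the weight design and Gram-matrix monotonicity), and the same union bound to reach $1-4\delta$. The only cosmetic difference is that the paper makes the martingale step explicit via the substitution $\tilde{\bx}_s=\sqrt{w_{i_s,s}}\bx_{a_s}$, $\tilde{\eta}_s=\sqrt{w_{i_s,s}}\eta_s$ so that the weighted Gram matrix is exactly $\lambda\bI+\sum_s\tilde{\bx}_s\tilde{\bx}_s^{\top}$; your remark that ``$w_{i_s,s}\eta_s$ is $1$-sub-Gaussian'' would need this rewriting to pair the noise with the correct regressor, but the content is identical.
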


With Lemma \ref{sufficient time} and \ref{concentration bound}, we prove the following theorem on the regret upper bound of RCLUB-WCU.


\begin{theorem}[\textbf{Regret Upper Bound of RCLUB-WCU}]
\label{thm:main}
With the assumptions in Section \ref{setup}, and picking $\alpha=\frac{\sqrt{d}+\sqrt{\lambda}}{C}$, the expected regret of the RCLUB-WCU algorithm for $T$ rounds satisfies
\begin{small}
     \begin{align}
        R(T)&\le O\big((\frac{C\sqrt{d}}{\gamma^{2}\tilde{\lambda}_{x}}+\frac{1}{\tilde{\lambda}_{x}^{2}})u\log(T)\big)+O\big(d\sqrt{mT}\log(T)\big)+ O\big(mCd\log^{1.5}(T)\big)\,.    \label{regret bound 3 parts}
    \end{align}   
\end{small}
\end{theorem}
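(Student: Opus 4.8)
The plan is to split the regret into the time before and after $T_0(\delta)$, and within the post-$T_0$ phase, decompose the per-round regret using the UCB guarantee from Lemma~\ref{concentration bound} into a standard ``sum of confidence radii'' term plus the extra contribution of corruption through the inflated radius parameter $\beta$.

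First I would handle the burn-in: for $t \le T_0(\delta)$ the instantaneous regret is bounded by a constant (rewards and preferences are bounded, so $\bx_{a_t^*}^\top\btheta_{i_t} - \bx_{a_t}^\top\btheta_{i_t} \le 2$), so this phase contributes $O(T_0(\delta))$. Plugging in the choice $\alpha = (\sqrt{d}+\sqrt{\lambda})/C$ and reading off the dominant terms of $T_0(\delta)$ from Lemma~\ref{sufficient time} gives the first group $O\big((\frac{C\sqrt d}{\gamma^2\tilde\lambda_x} + \frac{1}{\tilde\lambda_x^2})u\log(T)\big)$; I also need to absorb the low-probability failure events (total probability $O(\delta)$ across Lemmas~\ref{sufficient time} and \ref{concentration bound}) into the expectation by choosing $\delta = 1/T$ or similar, which turns $\log(1/\delta)$ into $\log T$ and the failure contribution into an additive constant.

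Next, for $t > T_0(\delta)$, on the good event all users are correctly clustered, so $V_t$ is exactly the true cluster of $i_t$ and $\hat\btheta_{V_t,t-1}$ is the weighted-regression estimate built from the full pooled data of that cluster. By the UCB arm choice and Lemma~\ref{concentration bound}, the instantaneous regret is at most $2C_{a_t,t} = 2\beta\norm{\bx_{a_t}}_{\bM_{V_t,t-1}^{-1}}$. Then I would sum these radii: the key technical points are (i) an elliptical-potential / log-determinant argument showing $\sum_t \norm{\bx_{a_t}}_{\bM_{V_t,t-1}^{-1}}^2 \lesssim d\log T$ per cluster, hence $\sum_t \norm{\bx_{a_t}}_{\bM_{V_t,t-1}^{-1}} \lesssim \sqrt{mTd\log T}$ after Cauchy–Schwarz over the $m$ clusters and $T$ rounds, and (ii) controlling the effect of the weights $w_{i_s,s}\le 1$ — since weights can shrink the information matrix, I need the standard trick (as in the weighted-regression corruption literature, e.g.\ He et al.) that the weight is only small exactly when the confidence radius is large, so $\sum_t w_{i_t,t}\norm{\bx_{a_t}}^2_{\cdot}$ and $\sum_t (1-w_{i_t,t})$ are both controlled, the latter by $\sum_t \norm{\bx_{a_t}}_{\cdot}/\alpha \lesssim \sqrt{mTd\log T}/\alpha$. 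Combining, $\beta \approx \sqrt{d\log T} + \alpha C = \sqrt{d\log T} + \sqrt d + \sqrt\lambda$, so $\sum_t \beta\norm{\bx_{a_t}}_{\cdot} \lesssim d\sqrt{mT}\log T$, giving the second term; the residual from the $(1-w)$ correction, carried through with the extra $\beta$ and an extra $\log$ from the potential bound, yields the $O(mCd\log^{1.5} T)$ third term.

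The main obstacle I expect is the bookkeeping around the weights and the corruption budget: making rigorous the claim that the weighted cluster-level Gram matrix still accumulates enough information for the elliptical potential bound while the corruption $\sum|c_t|\le C$ only leaks in through $\beta$ and through at most $O(\text{something})$ ``down-weighted'' rounds. Concretely I would need a lemma bounding $\sum_{t>T_0} (1-w_{i_t,t})$ and a lemma relating $\bM_{V_t,t-1}$ (built with weights) to the unweighted count $T_{i,t}$ so the per-cluster potential is still $O(d\log T)$; these are exactly the places where the interaction between the clustering structure (data pooled over a cluster) and the robust weighting scheme has to be reconciled, and where the $m$-dependence and the $\log^{1.5}T$ factor originate. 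The arm-regularity Assumption~\ref{assumption3} and uniform-arrival Assumption~\ref{assumption2} enter here to guarantee each cluster's pooled design is well-conditioned after $T_0$, closing the argument.
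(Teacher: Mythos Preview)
Your high-level plan matches the paper: set $\delta=1/T$, bound the first $T_0$ rounds trivially (this gives the first term after substituting $\alpha=(\sqrt d+\sqrt\lambda)/C$ into Lemma~\ref{sufficient time}), and for $t>T_0$ combine the UCB rule with Lemma~\ref{concentration bound} to get $R_t\le 2\beta\norm{\bx_{a_t}}_{\bM_{V_t,t-1}^{-1}}$, then control the sum of radii via an elliptical-potential lemma applied per cluster plus Cauchy--Schwarz across the $m$ clusters.

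Where your plan wobbles is precisely the part you flag as the obstacle, and the fixes you propose are not the ones actually needed. You do \emph{not} have to relate the weighted cluster Gram matrix back to an unweighted one, and Assumptions~\ref{assumption2}--\ref{assumption3} play no role after $T_0$ (they are used only inside Lemma~\ref{sufficient time}; the elliptical potential bound holds for an arbitrary vector sequence). The paper's device is cleaner than a $(1-w)$ correction: split the rounds into $\{t:w_{i_t,t}=1\}$ and $\{t:w_{i_t,t}<1\}$. On the first set, apply the elliptical potential lemma directly (to a sub-Gram matrix built from those rounds only) and Cauchy--Schwarz, obtaining $\sum\norm{\bx_{a_t}}_{\bM^{-1}_{V_t,t-1}}\le\sqrt{2mdT\log(1+T/\lambda d)}$. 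On the second set, $w_{i_t,t}<1$ forces $w_{i_t,t}=\alpha/\norm{\bx_{a_t}}_{\cdot}$, hence $\norm{\bx_{a_t}}_{\cdot}=\tfrac{1}{\alpha}\,w_{i_t,t}\norm{\bx_{a_t}}^2_{\cdot}$; summing these \emph{weighted} squared norms and applying elliptical potential once more yields $\sum\norm{\bx_{a_t}}_{\cdot}\le 2md\log(1+T/\lambda d)/\alpha$, crucially with no $\sqrt T$. Multiplying each piece by $\beta=O(\sqrt{d\log T})$ and substituting $\alpha$ gives exactly the second and third terms. Your route through $\sum_t(1-w_{i_t,t})\lesssim\sqrt{mTd\log T}/\alpha$ still carries a $\sqrt T$ factor, so after multiplying by $\beta$ it cannot collapse to $O(mCd\log^{1.5}T)$; the $w=1$ versus $w<1$ split is what removes it.
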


\noindent\textbf{Discussion and Comparison.} The regret bound in Eq.(\ref{regret bound 3 parts}) has three terms. The first term is the time needed to get enough information for accurate robust estimations such that RCLUB-WCU could cluster all users correctly afterward with high probability. This term is related to the \textit{corruption level} $C$, which is inevitable since, if there are more corrupted user feedback, it will be harder for the algorithm to learn the clustering structure correctly. The last two terms correspond to the regret after $T_0$ with the correct clustering. Specifically, the second term is caused by stochastic noises when leveraging the aggregated information within clusters to make recommendations; the third term associated with the \textit{corruption level} $C$ is the regret caused by the disruption of corrupted behaviors.

When the \textit{corruption level} $C$ is \textit{unknown}, we can use its estimated upper bound $\hat{C}\triangleq\sqrt{T}$ to replace $C$ in the algorithm. In this way, if $C\leq\hat{C}$, the bound will be replacing $C$ with $\hat{C}$ in Eq.(\ref{regret bound 3 parts}); when $C>\sqrt{T}$, $R(T)=O(T)$, which is already optimal for a large class of bandit algorithms \cite{he2022nearly}.

The following theorem gives a regret lower bound of the LOCUD problem.
\begin{theorem}[Regret lower bound for LOCUD]
\label{thm: lower bound}
There exists a problem instance for the LOCUD problem such that for any algorithm
\begin{equation*}
    R(T)\geq \Omega(d\sqrt{mT}+dC)\,.
\end{equation*}
\end{theorem}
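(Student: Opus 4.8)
\textbf{Proof proposal for the regret lower bound (Theorem~\ref{thm: lower bound}).}
The plan is to decompose the bound $\Omega(d\sqrt{mT} + dC)$ into two independent parts and construct a hard instance that simultaneously forces both. For the $\Omega(d\sqrt{mT})$ term, I would reduce LOCUD to $m$ separate instances of a standard stochastic linear bandit problem. Concretely, partition the $u$ users into $m$ clusters of equal size and, for each cluster, plant an \emph{unknown} preference vector $\btheta^j$ drawn from a packing of the unit ball of cardinality exponential in $d$, so that distinguishing them requires $\Omega(d^2/\Delta^2)$ samples at gap $\Delta$ (the classical linear-bandit lower bound construction, e.g.\ via a hypercube of $\{-\Delta/\sqrt d, +\Delta/\sqrt d\}^d$ perturbations of a fixed direction and Assouad/Le~Cam). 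By Assumption~\ref{assumption2} users arrive uniformly, so in $T$ rounds each cluster is served $\Theta(T/m)$ times; running the single-cluster lower bound on each of the $m$ clusters and summing gives $m \cdot \Omega(d\sqrt{T/m}) = \Omega(d\sqrt{mT})$. The key point is that the $m$ clusters share nothing informative across each other, so the per-cluster regrets add up. One must check that the item-regularity Assumption~\ref{assumption3} is compatible with the action set used in the linear-bandit lower bound; choosing $\rho$ to be, say, uniform on a suitable finite set or on the sphere keeps $\lambda_x$ bounded away from zero while still admitting the standard construction (or one invokes the known contextual linear-bandit lower bound of order $d\sqrt{n}$ per cluster, which already respects stochastic contexts).

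For the $\Omega(dC)$ term, I would use an adversary argument à la \cite{bogunovic2021stochastic,he2022nearly}: with a corruption budget $C$, the adversary can, on roughly $C$ (or $C/\Delta$ with $\Delta=\Theta(1)$) of the rounds where a corrupted user is served, corrupt the reward so as to make a suboptimal arm look optimal, and do so in a way that is information-theoretically indistinguishable from a genuine environment in which that arm \emph{is} optimal. In dimension $d$ one can spread this across $\Theta(d)$ nearly-orthogonal directions so that the algorithm is forced to incur $\Omega(1)$ regret on $\Omega(dC/d) \cdot$ (scaling) rounds — more precisely, the standard two-environment coupling shows any algorithm suffers $\Omega(dC)$ extra regret against the worse of the two. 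Since at least one cluster contains a corrupted user by hypothesis ($\tilde{\mathcal U}\neq\emptyset$), and corrupted users arrive $\Theta(1/u)$ fraction of the time so the budget $C$ can indeed be spent, this lower bound is realizable within the LOCUD model.

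Finally I would combine the two: take the product/disjoint-union of the two constructions (embed the corruption-hard directions into one designated cluster and the cluster-separation-hard structure across all clusters, on disjoint coordinate blocks or at disjoint time scales), and conclude $R(T) \ge \Omega(d\sqrt{mT}) + \Omega(dC) = \Omega(d\sqrt{mT} + dC)$ up to constants, possibly after a $\max$-to-sum step (each is at least half the max).

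\textbf{Main obstacle.} The delicate part is the $\Omega(dC)$ piece: one must verify that the corruption-based indistinguishability argument survives the \emph{multi-user clustering} overhead — i.e.\ that the agent's ability to pool data across users in a cluster does not let it average out the corruptions and escape the bound. This is handled by making the corrupted user the \emph{only} user ever served in its hard cluster during the relevant phase (legal under uniform arrival by conditioning on the arrival sequence, which only costs constants since each user is served $\Theta(T/u)$ times), so pooling gives no extra clean samples; alternatively one makes all users in that cluster corrupted. Getting the dependence on $d$ (rather than just $C$) also requires the careful orthogonal-directions packing, which is where most of the technical care goes, but it is standard once the reduction is set up.
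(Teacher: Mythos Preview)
Your proposal is correct and follows essentially the same high-level route as the paper: reduce LOCUD to $m$ independent per-cluster linear-bandit problems and sum the per-cluster lower bounds, with equal horizons $T_j=T/m$ yielding the $d\sqrt{mT}$ term.

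The paper's proof, however, is considerably terser and avoids almost all of the construction work you outline. Rather than rebuilding the hypercube/packing argument for $\Omega(d\sqrt{n})$ and a separate corruption-indistinguishability argument for $\Omega(dC)$, it simply invokes the existing single-user corrupted-linear-bandit lower bound $R(n)\ge\Omega(d\sqrt{n}+dC)$ from \cite{he2022nearly} (their Table~1) as a black box. It then grants the algorithm full knowledge of the true clustering---which can only help---so the problem decomposes \emph{exactly} into $m$ independent corrupted linear bandits with horizons $T_j$, giving $R(T)\ge dC+\sum_{j} d\sqrt{T_j}$; taking $T_j=T/m$ finishes.

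This ``give away the clustering'' framing also dissolves what you flag as the main obstacle: once clustering is known and all of the corruption budget is placed in a single cluster, there is no cross-cluster pooling to average out corruptions, and the $\Omega(dC)$ term is inherited directly from the cited single-user result without any new orthogonal-directions packing or conditioning on arrival sequences. Your more explicit construction would work, but it is unnecessary given the black-box reduction the paper uses.
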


Its proof and discussions can be found in Appendix \ref{appendix: proof of lower bound}. The upper bound in Theorem \ref{thm:main}
asymptotically matches this lower bound in $T$ up to logarithmic factors, showing our regret bound is nearly optimal.
 
 We then compare our regret upper bound with several degenerated 
cases. First, when $C=0$, \emph{i.e.}, all users are normal, our setting degenerates to the classic CB problem \cite{gentile2014online}. In this case the bound in Theorem \ref{thm:main} becomes $O({1}/{\tilde{\lambda}_{x}^{2}}\cdot u\log(T))+O(d\sqrt{mT}\log(T))$, perfectly matching the state-of-the-art results in CB \cite{gentile2014online,li2018online,li2019improved}. Second, when $m=1$ and $u=1$, \emph{i.e.}, there is only one user, our setting degenerates to linear bandits with adversarial corruptions \cite{li2019stochastic,he2022nearly}, and the bound in Theorem \ref{thm:main} becomes $O(d\sqrt{T}\log(T))+ O(Cd\log^{1.5}(T))$, it also perfectly matches the nearly optimal result in \cite{he2022nearly}.
The above comparisons also show the tightness of
the regret bound of RCLUB-WCU.

\subsection{Theoretical Performance Guarantee for OCCUD}
\label{section occud theory}

The following theorem gives a performance guarantee of the online detection algorithm OCCUD.

\begin{theorem}[\textbf{Theoretical Guarantee for OCCUD}]
\label{thm:occud}
With assumptions in Section \ref{setup}, at $\forall{t\geq T_0(\delta)}$, for any detected corrupted user $i\in \Tilde{\mathcal{U}}_t$, with probability at least $1-5\delta$, $i$ is indeed a corrupted user.
\end{theorem}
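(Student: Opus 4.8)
The statement is a "no false positives" guarantee, so the natural route is to prove the contrapositive: if user $i$ is \emph{normal} (i.e.\ $i\notin\Tilde{\mathcal{U}}$), then at every round $t\ge T_0(\delta)$ the detection inequality in Line \ref{detect line} \emph{fails}, hence $i$ is never added to $\Tilde{\mathcal{U}}_t$; consequently any user that \emph{is} added must be corrupted. The whole argument is a triangle-inequality decomposition of $\norm{\Tilde{\btheta}_{i,t}-\hat{\btheta}_{V_{i,t},t}}_2$ around the ground-truth vector $\btheta_i$, bounding each of the two resulting pieces by exactly one of the two summands appearing in the threshold. To begin, I would condition on the event of Lemma \ref{sufficient time} (probability $\ge 1-3\delta$): for all $t\ge T_0(\delta)$ the connected components of $G_t$ coincide with the true clusters, so the inferred cluster $V_{i,t}$ of the normal user $i$ consists of users all sharing the common preference vector $\btheta^{j(i)}=\btheta_i$. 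This lets me treat the cluster-level weighted regression as a (corrupted) weighted regression whose "true" parameter is $\btheta_i$.

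\textbf{Bounding the non-robust estimate.} Since $i$ is normal, every sample it contributed has $c_s=0$, so $\Tilde{\btheta}_{i,t}$ is an ordinary (unweighted) ridge estimator built from $T_{i,t}$ clean $1$-sub-Gaussian observations. The standard self-normalized concentration bound \cite{abbasi2011improved} gives, with probability at least $1-\delta$, $\norm{\Tilde{\btheta}_{i,t}-\btheta_i}_{\lambda\bI+\Tilde{\bM}_{i,t}}\le\sqrt{d\log(1+T_{i,t}/(\lambda d))+2\log(1/\delta)}+\sqrt{\lambda}$; dividing through by $\sqrt{\lambda_{\min}(\lambda\bI+\Tilde{\bM}_{i,t})}=\sqrt{\lambda_{\min}(\Tilde{\bM}_{i,t})+\lambda}$ converts this into precisely the first term of the threshold. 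A union bound over the $u$ users (whose $\log u$ cost can be folded into $\delta$) makes this hold simultaneously for all users.

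\textbf{Bounding the robust cluster estimate.} This is the step I expect to be the main obstacle. The estimator $\hat{\btheta}_{V_{i,t},t}=\bM_{V_{i,t},t}^{-1}\bb_{V_{i,t},t}$ aggregates weighted samples from all users currently in $V_{i,t}$, some of which may carry corrupted rewards. Following the weighted-regression analysis of \cite{he2022nearly,zhao2021linear}, the error in the $\bM_{V_{i,t},t}$-norm splits into a noise part, bounded with probability $\ge 1-\delta$ by $\sqrt{d\log(1+T_{V_{i,t},t}/(\lambda d))+2\log(1/\delta)}+\sqrt{\lambda}$ via the same self-normalized machinery behind Lemma \ref{concentration bound}, and a corruption part bounded by $\sum_{s}w_{i_s,s}\abs{c_s}\,\norm{\bx_{a_s}}_{\bM_{V_{i,t},t}^{-1}}\le\alpha\sum_s\abs{c_s}\le\alpha C$. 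The key inequality $w_{i_s,s}\norm{\bx_{a_s}}_{\bM_{V_{i,t},t}^{-1}}\le\alpha$ is where the weight design is used: by definition $w_{i_s,s}\norm{\bx_{a_s}}_{\bM_{i_s,s}^{\prime-1}}\le\alpha$, and since $i_s$ is still in $V_{i,t}$ at round $t$ (components only split as edges are deleted) and $s\le t$, monotonicity of the accumulated designs gives $\bM_{i_s,s}^{\prime}=\lambda\bI+\bM_{i_s,s}\preceq\lambda\bI+\sum_{\ell\in V_{i,t}}\bM_{\ell,t}=\bM_{V_{i,t},t}$, hence $\norm{\bx_{a_s}}_{\bM_{V_{i,t},t}^{-1}}\le\norm{\bx_{a_s}}_{\bM_{i_s,s}^{\prime-1}}$. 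Dividing the combined bound by $\sqrt{\lambda_{\min}(\bM_{V_{i,t},t})}$ yields exactly the second term of the threshold. Care is also needed so that the self-normalized martingale bound is applied to the correct history-dependent filtration, since the weights $w_{i_s,s}$ themselves depend on the past.

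\textbf{Conclusion.} On the intersection of the three good events (failure probability at most $3\delta+\delta+\delta=5\delta$), the triangle inequality gives $\norm{\Tilde{\btheta}_{i,t}-\hat{\btheta}_{V_{i,t},t}}_2\le\norm{\Tilde{\btheta}_{i,t}-\btheta_i}_2+\norm{\btheta_i-\hat{\btheta}_{V_{i,t},t}}_2$, and by the two bounds above this sum is at most the right-hand side of the inequality in Line \ref{detect line}. Thus a normal user never triggers detection at any $t\ge T_0(\delta)$, so every $i\in\Tilde{\mathcal{U}}_t$ is indeed corrupted, which proves the theorem.
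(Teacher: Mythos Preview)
Your proposal is correct and follows essentially the same route as the paper: contrapositive, triangle-inequality split around $\btheta_i$, Abbasi--Yadkori self-normalized bound for the clean unweighted estimator, the weighted-regression decomposition into noise plus $\alpha C$ corruption for the cluster estimator (using exactly the monotonicity $\bM_{i_s,s}^{\prime}\preceq\bM_{V_{i,t},t}$ you identify), and combining with Lemma~\ref{sufficient time} for the $3\delta+\delta+\delta$ accounting. The only minor discrepancy is that the paper proves the statement for a fixed user (the threshold carries $\log(1/\delta)$, not $\log(u/\delta)$), so your union bound over users is unnecessary here.
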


This theorem guarantees that after RCLUB-WCU learns the clustering structure accurately, with high probability, the corrupted users detected by OCCUD are indeed corrupted, showing the high detection accuracy of OCCUD. The proof of Theorem \ref{thm:occud} can be found in Appendix \ref{appendix: proof of lower bound}.

\section{Experiments}
\label{section:experiments}
This section shows experimental results on synthetic and real data to evaluate RCLUB-WCU's recommendation quality and OCCUD's detection accuracy. We compare RCLUB-WCU to LinUCB \cite{abbasi2011improved} with a single non-robust estimated vector for all users, LinUCB-Ind with separate non-robust estimated vectors for each user, CW-OFUL \cite{he2022nearly} with a single robust estimated vector for all users, CW-OFUL-Ind with separate robust estimated vectors for each user, CLUB\cite{gentile2014online}, and SCLUB\cite{li2019improved}. More description of these baselines are in Appendix \ref{appendix: baselines}. To show that the design of OCCUD is non-trivial, we develop a straightforward detection
algorithm GCUD, which leverages the same cluster structure as OCCUD but detects corrupted users by selecting users
with highest \begin{small}
$\norm{\hat{\boldsymbol{\theta}}_{i,t} - \hat{\boldsymbol{\theta}}_{V_{i,t},t-1}}_2$    
\end{small}
in each inferred cluster. GCUD selects users according to the underlying percentage of corrupted
users, which is unrealistic in practice, but OCCUD still performs better in this unfair condition.

\noindent\textbf{Remark.} 
The offline detection methods \cite{zhang2021fraudre, dou2020enhancing, li2022dual, qin2022explainable} need to know all the user information in advance to derive the user embedding for classification, so they cannot be directly applied in online detection with bandit feedback thus cannot be directly compared to OCCUD. However, we observe the AUC achieved by
OCCUD on Amazon and Yelp (in Tab.\ref{tab:real AUC}) is similar to recent offline methods \cite{li2022dual, qin2022explainable}. Additionally, OCCUD has rigorous theoretical performance guarantee (Section \ref{section occud theory}). 

\subsection{Experiments on Synthetic Dataset}
\label{exp:synthetic}

We use $u = 1,000$ users and $m = 10$ clusters, where each cluster contains $100$ users. We randomly select $100$ users as the corrupted users. The preference and arm (item) vectors are drawn in $d-1$ ($d=50$) dimensions with each entry a standard Gaussian variable and then normalized, added
one more dimension with constant 1, and divided by $\sqrt{2}$ \cite{li2019improved}. We fix an arm set with $
\left|\mathcal{A}\right|= 1000$ items, at each round, 20 items are randomly selected to form a set $\mathcal{A}_{t}$ to choose from. Following \cite{zhao2021linear,bogunovic2021stochastic}, in the first $k$ rounds, we always flip the reward of corrupted users by setting $r_{t} = -\boldsymbol{x}_{a_{t}}^{\mathrm{T}}\boldsymbol{\theta}_{i_{t},t} + \eta_{t}$. And we leave the remaining $T-k$ rounds intact. Here we set $T=1,000,000$ and $k=20,000$.

\begin{figure*}[htp]
\resizebox{0.96\columnwidth}{!}{
\begin{minipage}{\columnwidth}
    \subfigure[Synthetic]{
    \includegraphics[scale=0.215]{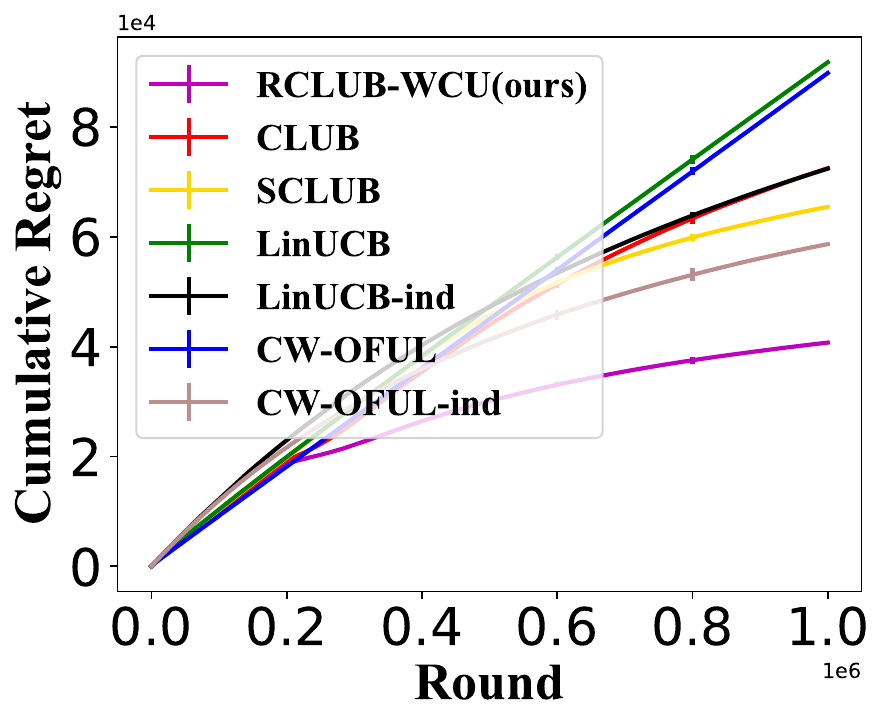}
    }
    \subfigure[Movielens ]{
    \includegraphics[scale=0.215]{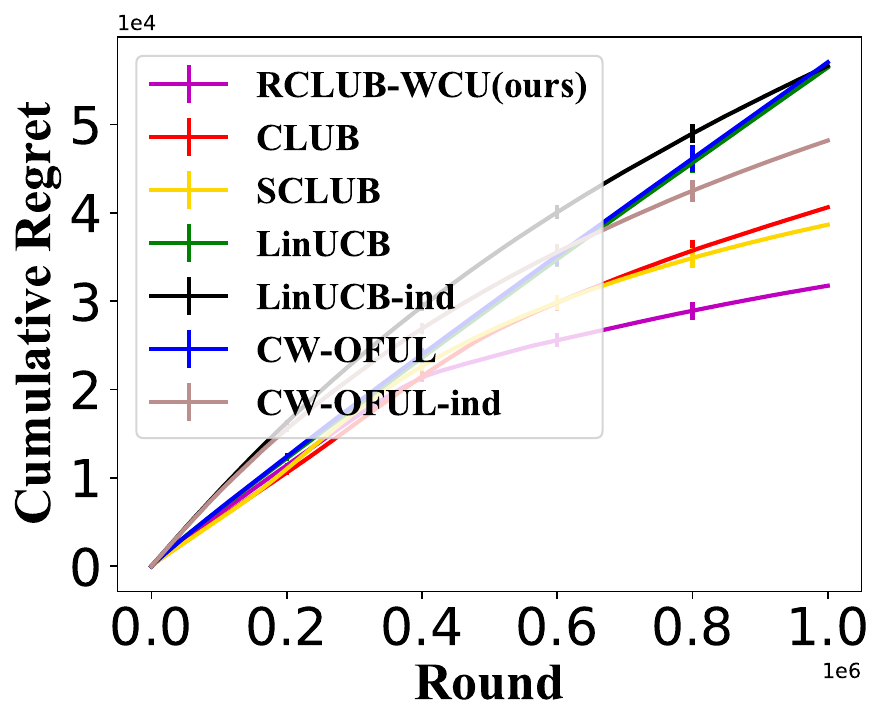}
    }
    \subfigure[Amazon ]{
    \includegraphics[scale=0.215]{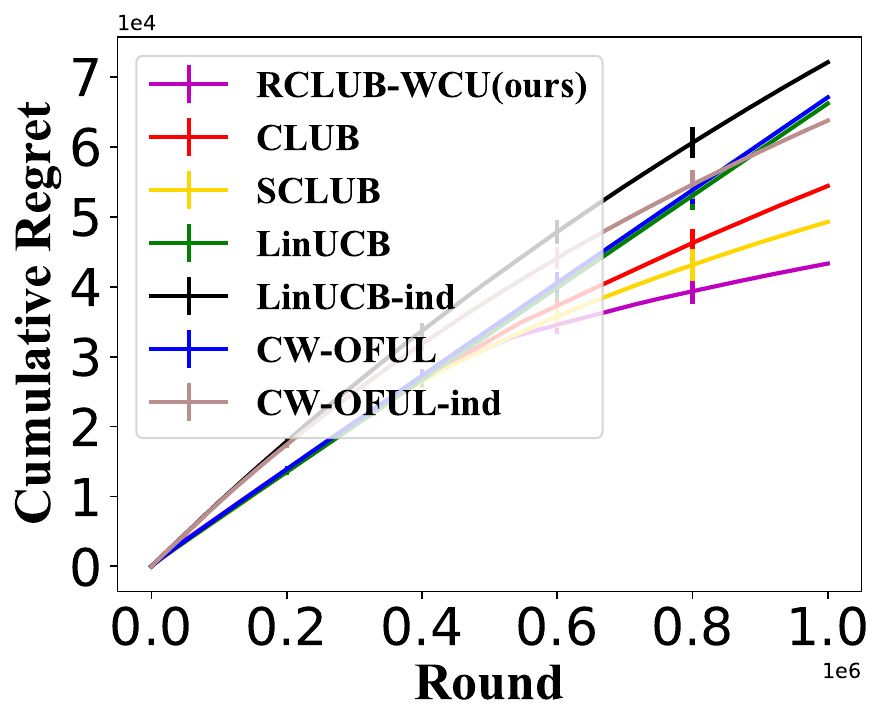}
    }
    \subfigure[Yelp ]{
    \includegraphics[scale=0.215]{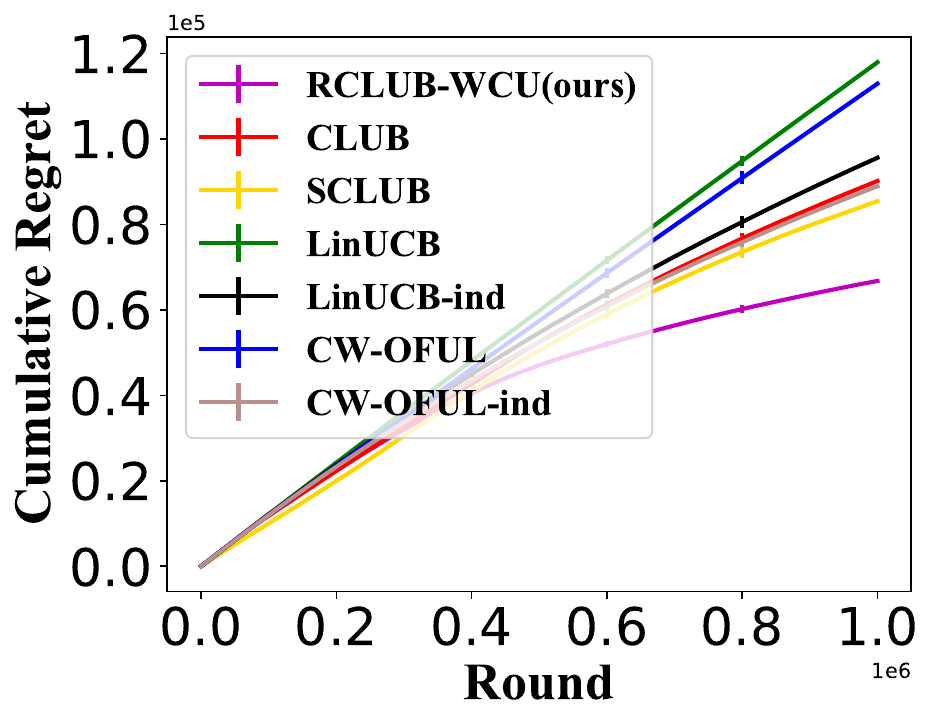}
    }

    \caption{Recommendation results on the synthetic and real-world datasets}
    \label{fig: real regret}\end{minipage}}

\end{figure*}

Fig.\ref{fig: real regret}(a) shows the  recommendation results. RCLUB-WCU outperforms all baselines and achieves a sub-linear regret. LinUCB and CW-OFUL perform worst as they ignore the preference differences among users. CW-OFUL-Ind outperforms LinUCB-Ind because it considers the corruption, but worse than RCLUB-WCU since it does not consider leveraging user relations to speed up learning. 

The detection results are shown in Tab.\ref{tab:real AUC}. We test the AUC of OCCUD and GCUD in every $200,000$ rounds. OCCUD's performance improves over time with more interactions, while GCUD's performance is much worse as it detects corrupted users only relying on the robust estimations. OCCUD finally achieves an AUC of 0.855, indicating it can identify most of the corrupted users.

\subsection{Experiments on Real-world Datasets}

We use three real-world data Movielens \cite{harper2015movielens}, Amazon\cite{mcauley2013amateurs}, and Yelp \cite{rayana2015collective}. 
The Movielens data does not have the corrupted users' labels, so following \cite{liu2017holoscope}, we manually select the corrupted users. On Amazon data, following \cite{zhang2021fraudre}, we label the users with more than 80\% helpful votes as normal users, and label users with less than 20\% helpful votes as corrupted users. The Yelp data contains users and their comments on restaurants with true labels of the normal users and corrupted users.

We select 1,000 users and 1,000 items for Movielens; 1,400 users and 800 items for Amazon; 2,000 users and 2,000 items for Yelp. The ratios of corrupted users on these data are 10\%, 3.5\%, and 30.9\%, respectively.
We generate the preference and item vectors following \cite{wu2021clustering,li2019improved}. We first construct the binary feedback matrix through the users' ratings: if the rating is greater than 3, the feedback is 1; otherwise, the feedback is 0. Then we use SVD to decompose the extracted binary feedback matrix $R_{u \times m} = \boldsymbol{\theta}SX^{\mathrm{T}}$, where $\boldsymbol{\theta} = (\boldsymbol{\theta}_{i}), i \in [u]$ and $X = (\boldsymbol{x}_{j}), j \in [m]$, and select $d=50$ dimensions. We have 10 clusters on Movielens and Amazon, and 20 clusters on Yelp. We use the same corruption mechanism as the synthetic data with $T=1,000,000$ and $k=20,000$. We conduct more experiments in different environments to show our algorithms' robustness in Appendix.\ref{sec:more experiments}.
\begin{wraptable}{r}{0.56\textwidth}
\resizebox{0.56\columnwidth}{!}{
    \begin{tabular}{|c|c|c|c|c|c|c|}
    \hline 
    Dataset  & \makebox[0.13\textwidth][c]{\diagbox{Alg}{Time}}  & 0.2M  &0.4M  &0.6M  &0.8M  & 1M \\
    \hline  
   \multirow{2}*{Synthetic} & OCCUD  & 0.599  & 0.651  & 0.777  &0.812  &\textbf{0.855} \\
   \cline{2-7}
   & GCUD  & 0.477  & 0.478  & 0.483  &0.484  &0.502\\
   \hline
   \multirow{2}*{ Movielens} & OCCUD  & 0.65  & 0.750  & 0.785  &0.83  &\textbf{0.85} \\
   \cline{2-7}
   & GCUD  & 0.450  & 0.474   & 0.485  &0.489  &0.492\\
   \hline
   \multirow{2}*{Amazon} & OCCUD  & 0.639  & 0.735  & 0.761  & 0.802 & \textbf{0.840} \\
   \cline{2-7}
   & GCUD  & 0.480  & 0.480 & 0.486  & 0.500  & 0.518 \\
   \hline
   \multirow{2}*{Yelp} & OCCUD  & 0.452  & 0.489 & 0.502  & 0.578  & \textbf{0.628} \\
   \cline{2-7}
   &GCUD  & 0.473  & 0.481  & 0.496  & 0.500  & 0.510 \\
   \hline
    \end{tabular}}
    \caption{Detection results on synthetic and real datasets}
    \label{tab:real AUC}

\end{wraptable}
The recommendation results are shown in Fig.\ref{fig: real regret}(b)-(d). RCLUB-WCU outperforms all baselines. 
On the Amazon dataset, the percentage of corrupted users is lowest, RCLUB-WCU's advantages over baselines decrease because of the weakened corruption. The corrupted user detection results are provided in Tab.\ref{tab:real AUC}. OCCUD's performance improves over time and is much better than GCUD. On the Movielens dataset, OCCUD achieves an AUC of 0.85; on the Amazon dataset, OCCUD achieves an AUC of 0.84; and on the Yelp dataset, OCCUD achieves an AUC of 0.628. According to recent works on offline settings \cite{li2022dual, qin2022explainable}, our results are relatively high.

\section{Conclusion}


In this paper, we are the first to propose the novel LOCUD problem, where there are many users with \textit{unknown} preferences and \textit{unknown} relations, and some corrupted users can occasionally perform disrupted actions to fool the agent. Hence, the agent not only needs to learn the \textit{unknown} user preferences and relations robustly from potentially disrupted bandit feedback, balance the exploration-exploitation trade-off to minimize regret, but also needs to detect the corrupted users over time. To robustly learn and leverage the \textit{unknown} user preferences and relations from corrupted behaviors, we propose a novel bandit algorithm RCLUB-WCU. To detect the corrupted users in the online bandit setting, based on the learned user relations of RCLUB-WCU, we propose a novel detection algorithm OCCUD. We prove a regret upper bound for RCLUB-WCU, which matches the lower bound asymptotically in $T$ up to logarithmic factors and matches the state-of-the-art results in degenerate
cases. We also give a theoretical guarantee for the detection accuracy of OCCUD. Extensive experiments show that our proposed algorithms achieve superior performance over previous bandit algorithms and high corrupted user detection accuracy.
\section{Acknowledgement}
The corresponding author Shuai Li is supported by National Key Research and Development Program of China (2022ZD0114804) and National Natural Science Foundation of China (62376154, 62006151, 62076161). The work of John C.S. Lui was supported in part by the RGC's SRFS2122-4S02.

\newpage
\bibliographystyle{plainnat}
\bibliography{main}

\newpage
\appendix
\section*{Appendix}
\label{appendix}

\section{Proof of Lemma \ref{sufficient time}}
\label{appendix: proof of sufficient time}
We first prove the following result:\\
With probability at least $1-\delta$ for some $\delta\in(0,1)$, at any $t\in[T]$:
\begin{equation}
    \norm{\hat{\btheta}_{i,t}-\btheta^{j(i)}}_2\leq\frac{\beta(T_{i,t},\frac{\delta}{u})}{\sqrt{\lambda+\lambda_{\text{min}}(\bM_{i,t})}}, \forall{i\in\mathcal{U}}\label{norm difference bound}\,,
\end{equation}
where $\beta(T_{i,t},\frac{\delta}{u})\triangleq\sqrt{2\log(\frac{u}{\delta})+d\log(1+\frac{T_{i,t}}{\lambda d})}+\sqrt{\lambda}+\alpha C$.
\begin{align}    \hat{\btheta}_{i,t}-\btheta^{j(i)}&=(\lambda \bI+\bM_{i,t})^{-1}\bb_{i,t}-\btheta^{j(i)}\nonumber\\&=(\lambda \bI+\sum_{s\in[t]\atop i_s=i}w_{i_s,s}\bx_{a_s}\bx_{a_s}^{\top})^{-1}\sum_{s\in[t]\atop i_s=i}w_{i_s,s}\bx_{a_s}r_{s}-\btheta^{j(i)}\nonumber\\
&=(\lambda \bI+\sum_{s\in[t]\atop i_s=i}w_{i_s,s}\bx_{a_s}\bx_{a_s}^{\top})^{-1}\bigg(\sum_{s\in[t]\atop i_s=i}w_{i_s,s}\bx_{a_s}(\bx_{a_s}^{\top}\btheta_{i_s}+\eta_s+c_s)\bigg)-\btheta^{j(i)}\nonumber\\
    &=(\lambda \bI+\sum_{s\in[t]\atop i_s=i}w_{i_s,s}\bx_{a_s}\bx_{a_s}^{\top})^{-1}\bigg[(\lambda \bI+\sum_{s\in[t]\atop i_s=i}w_{i_s,s}\bx_{a_s}\bx_{a_s}^{\top})\btheta^{j(i)}-\lambda\btheta^{j(i)}+\sum_{s\in[t]\atop i_s=i}w_{i_s,s}\bx_{a_s}\eta_s\nonumber\\
    &\quad\quad+\sum_{s\in[t]\atop i_s=i}w_{i_s,s}\bx_{a_s}c_s\bigg]-\btheta^{j(i)}\nonumber\\
    &=-\lambda\bM_{i,t}^{\prime-1}\btheta^{j(i)}+{\bM_{i,t}^{\prime-1}}\sum_{s\in[t]\atop i_s=i}w_{i_s,s}\bx_{a_s}\eta_s+\bM_{i,t}^{\prime-1}\sum_{s\in[t]\atop i_s=i}w_{i_s,s}\bx_{a_s}c_s\nonumber\,,
\end{align}
where we denote ${\bM_{i,t}^{\prime}}=\bM_{i,t}+\lambda\bI$, and the above equations hold by definition.

Therefore, we have \begin{equation}
    \norm{\hat{\btheta}_{i,t}-\btheta^{j(i)}}_2\leq\lambda\norm{\bM_{i,t}^{\prime-1}\btheta^{j(i)}}_2+\norm{\bM_{i,t}^{\prime-1}\sum_{s\in[t]\atop i_s=i}w_{i_s,s}\bx_{a_s}\eta_s}_2+\norm{\bM_{i,t}^{\prime-1}\sum_{s\in[t]\atop i_s=i}w_{i_s,s}\bx_{a_s}c_s}_2\,.\label{difference bound parts}
\end{equation}
We then bound the three terms in Eq.(\ref{difference bound parts}) one by one. For the first term:
\begin{equation}
    \lambda\norm{\bM_{i,t}^{\prime-1}\btheta^{j(i)}}_2\leq \lambda\norm{{\bM_{i,t}^{\prime-\frac{1}{2}}}}_2^2\norm{\btheta^{j(i)}}_2\leq\frac{\sqrt{\lambda}}{\sqrt{\lambda_{\text{min}}({\bM_{i,t}^{\prime}})}}\label{first term}\,,
\end{equation}
where we use the Cauchy–Schwarz inequality, the inequality for the operator norm of matrices, and the fact that $\lambda_{\text{min}}({\bM_{i,t}^{\prime}})\geq\lambda$.

For the second term in Eq.(\ref{difference bound parts}), we have 
\begin{align}
    \norm{\bM_{i,t}^{\prime-1}\sum_{s\in[t]\atop i_s=i}w_{i_s,s}\bx_{a_s}\eta_s}_2
    &\leq\norm{{\bM_{i,t}^{\prime-\frac{1}{2}}}\sum_{s\in[t]\atop i_s=i}w_{i_s,s}\bx_{a_s}\eta_s}_2\norm{{\bM_{i,t}^{\prime-\frac{1}{2}}}}_2\label{operator norm}\\
    &=\frac{\norm{\sum_{s\in[t]\atop i_s=i}w_{i_s,s}\bx_{a_s}\eta_s}_{\bM_{i,t}^{\prime-1}}}{\sqrt{\lambda_{\text{min}}({\bM_{i,t}^{\prime}})}}\label{Courant–Fischer1}\,,
\end{align}
where Eq.(\ref{operator norm}) follows by the Cauchy–Schwarz inequality and the inequality for the operator norm of matrices, and Eq.(\ref{Courant–Fischer1}) follows by the Courant-Fischer theorem.

Let $\Tilde{\bx}_{s}\triangleq \sqrt{w_{i_s,s}}\bx_{a_s}$, $\Tilde{\eta}_{s}\triangleq \sqrt{w_{i_s,s}}\eta_s$, then we have: $\norm{\Tilde{\bx}_{s}}_2\leq\norm{\sqrt{w_{i_s,s}}}_2\norm{\bx_{a_s}}_2\leq1$, $\Tilde{\eta}_{s}$ is still 1-sub-gaussian (since $\eta_s$ is 1-sub-gaussian and $\sqrt{w_{i_s,s}}\leq1$), ${\bM_{i,t}^{\prime}}=\lambda\bI+\sum_{s\in[t]\atop i_s=i}\Tilde{\bx}_{s}\Tilde{\bx}_{s}^{\top}$, and the nominator in Eq.(\ref{Courant–Fischer1}) becomes $\norm{\sum_{s\in[t]\atop i_s=i}\Tilde{\bx}_{s}\Tilde{\eta}_{s}}_{\bM_{i,t}^{\prime-1}}$. Then, following Theorem 1 in \cite{abbasi2011improved} and by union bound, with probability at least $1-\delta$ for some $\delta\in(0,1)$, for any $i\in\mathcal{U}$, we have:
\begin{align}
    \norm{\sum_{s\in[t]\atop i_s=i}w_{i_s,s}\bx_{a_s}\eta_s}_{\bM_{i,t}^{\prime-1}}&=\norm{\sum_{s\in[t]\atop i_s=i}\Tilde{\bx}_{s}\Tilde{\eta}_{s}}_{\bM_{i,t}^{\prime-1}}\nonumber\\
    &\leq\sqrt{2\log(\frac{u}{\delta})+\log(\frac{\text{det}({\bM_{i,t}^{\prime}})}{\text{det}(\lambda\bI)})}\nonumber\\
    &\leq \sqrt{2\log(\frac{u}{\delta})+d\log(1+\frac{T_{i,t}}{\lambda d})}\label{det inequality}\,,
\end{align}
where $\text{det}(\cdot)$ denotes the determinant of matrix arguement, Eq.(\ref{det inequality}) is because $\text{det}({\bM_{i,t}^{\prime}})\leq\Bigg(\frac{\text{trace}(\lambda\bI+\sum_{s\in[t]\atop i_s=i}w_{i_s,s}\bx_{a_s}\bx_{a_s}^{\top})}{d}\Bigg)^d\leq\big(\frac{\lambda d+T_{i,t}}{d}\big)^d$, and $\text{det}(\lambda\bI)=\lambda^d$.

For the third term in Eq.(\ref{difference bound parts}), we have 
\begin{align}
\norm{\bM_{i,t}^{\prime-1}\sum_{s\in[t]\atop i_s=i}w_{i_s,s}\bx_{a_s}c_s}_2&\leq \norm{{\bM_{i,t}^{\prime-\frac{1}{2}}}\sum_{s\in[t]\atop i_s=i}w_{i_s,s}\bx_{a_s}c_s}_2\norm{{\bM_{i,t}^{\prime-\frac{1}{2}}}^{}}_2\label{operator norm 2}\\
&=\frac{\norm{\sum_{s\in[t]\atop i_s=i}w_{i_s,s}\bx_{a_s}c_s}_{\bM_{i,t}^{\prime-1}}}{\sqrt{\lambda_{\text{min}}({\bM_{i,t}^{\prime}})}}\label{Courant–Fischer2}\\
&\leq \frac{\sum_{s\in[t]\atop i_s=i}\left|c_s\right|w_{i,s}\norm{\bx_{a_s}}_{\bM_{i,t}^{\prime-1}}}{\sqrt{\lambda_{\text{min}}({\bM_{i,t}^{\prime}})}}\nonumber\\
&\leq \frac{\alpha C}{\sqrt{\lambda_{\text{min}}({\bM_{i,t}^{\prime}})}}\label{definition of C and w}
\end{align}
where Eq.(\ref{operator norm 2}) follows by the Cauchy–Schwarz inequality and the inequality for the operator norm of matrices, Eq.(\ref{Courant–Fischer2}) follows by the Courant-Fischer theorem, and Eq.(\ref{definition of C and w}) is because by definition $w_{i,s}\leq \frac{\alpha}{\norm{\bx_{a_s}}_{\bM_{i,s}^{\prime-1}}}\leq\frac{\alpha}{\norm{\bx_{a_s}}_{\bM_{i,t}^{\prime-1}}}$ (since $\bM_{i,t}^{\prime}\succeq\bM_{i,s}^{\prime}$, $\bM_{i,s}^{\prime-1}\succeq\bM_{i,t}^{\prime-1}$, $\norm{\bx_{a_s}}_{\bM_{i,s}^{\prime-1}}\geq\norm{\bx_{a_s}}_{\bM_{i,t}^{\prime-1}}$), $\sum_{t=1}^{T} \left|c_t\right|\leq C$.

Combining the above bounds of these three terms, we can get that Eq.(\ref{norm difference bound}) holds.

We then prove the following technical lemma.
\begin{lemma}
    \label{assumption}
    Under Assumption \ref{assumption3}, at any time $t$, for any fixed unit vector $\btheta \in \RR^d$
    \begin{equation}
        \mathbb{E}_t[(\btheta^{\top}\bx_{a_t})^2|\left|\mathcal{A}_t\right|]\geq\tilde{\lambda}_x\triangleq\int_{0}^{\lambda_x} (1-e^{-\frac{(\lambda_x-x)^2}{2\sigma^2}})^{K} dx\,,
    \end{equation}
    where $K$ is the upper bound of $\left|\mathcal{A}_t\right|$ for any $t$.
\end{lemma}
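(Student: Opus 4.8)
\textbf{Proof proposal for Lemma \ref{assumption}.}

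The plan is to lower bound $\mathbb{E}_t[(\btheta^{\top}\bx_{a_t})^2 \mid |\mathcal{A}_t|]$ by passing to the layer-cake (tail-integral) representation of the expectation of the nonnegative random variable $Y \triangleq (\btheta^{\top}\bx_{a_t})^2$, and then controlling the tail $\PP_t(Y \geq x)$ using Assumption \ref{assumption3} together with the fact that $a_t$ is chosen from $\mathcal{A}_t$ by RCLUB-WCU. Concretely, write
\begin{equation*}
\mathbb{E}_t[(\btheta^{\top}\bx_{a_t})^2 \mid |\mathcal{A}_t|] = \int_0^{\infty} \PP_t\big((\btheta^{\top}\bx_{a_t})^2 \geq x \big) \, dx \geq \int_0^{\lambda_x} \PP_t\big((\btheta^{\top}\bx_{a_t})^2 \geq x \big) \, dx\,,
\end{equation*}
so it suffices to show $\PP_t((\btheta^{\top}\bx_{a_t})^2 \geq x) \geq (1 - e^{-(\lambda_x - x)^2/(2\sigma^2)})^{K}$ for every $x \in [0,\lambda_x]$.

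The key step is to reduce the event for the \emph{selected} arm $a_t$ to a statement about \emph{all} arms in $\mathcal{A}_t$. Since $a_t \in \mathcal{A}_t$, the event $\{(\btheta^{\top}\bx_{a})^2 \geq x \text{ for every } a \in \mathcal{A}_t\}$ implies $\{(\btheta^{\top}\bx_{a_t})^2 \geq x\}$, hence $\PP_t((\btheta^{\top}\bx_{a_t})^2 \geq x) \geq \PP_t(\forall a \in \mathcal{A}_t:\ (\btheta^{\top}\bx_{a})^2 \geq x)$. By Assumption \ref{assumption3} the feature vectors $\{\bx_a\}_{a \in \mathcal{A}_t}$ are drawn i.i.d.\ from $\rho$ and $|\mathcal{A}_t| \le K$, so conditionally on $|\mathcal{A}_t|$ this probability factorizes as $\big(\PP_{\bx \sim \rho}((\btheta^{\top}\bx)^2 \geq x)\big)^{|\mathcal{A}_t|} \ge \big(\PP_{\bx \sim \rho}((\btheta^{\top}\bx)^2 \geq x)\big)^{K}$, where the last inequality uses that the per-arm probability is at most $1$ and the exponent is at most $K$. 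It then remains to show the single-draw bound $\PP_{\bx\sim\rho}((\btheta^{\top}\bx)^2 \ge x) \ge 1 - e^{-(\lambda_x - x)^2/(2\sigma^2)}$ for $x \in [0,\lambda_x]$, equivalently $\PP_{\bx\sim\rho}((\btheta^{\top}\bx)^2 < x) \le e^{-(\lambda_x-x)^2/(2\sigma^2)}$.

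For this last single-draw estimate, the idea is to combine the second-moment lower bound with the sub-Gaussian upper tail. Let $Z \triangleq (\btheta^{\top}\bx)^2 \ge 0$. From $\EE_{\bx\sim\rho}[\bx\bx^\top] \succeq \lambda_x \bI$ we get $\EE[Z] = \btheta^\top \EE[\bx\bx^\top]\btheta \ge \lambda_x$, and from the sub-Gaussian tail assumption on $(\btheta^{\top}\bx)^2$ (variance proxy $\sigma^2$) we control the lower tail of $Z$ around its mean: for $x \le \lambda_x \le \EE[Z]$,
\begin{equation*}
\PP(Z < x) \le \PP\big(Z \le \EE[Z] - (\lambda_x - x)\big) \le e^{-(\lambda_x - x)^2/(2\sigma^2)}\,,
\end{equation*}
which is the required bound. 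Chaining these three inequalities and integrating over $x\in[0,\lambda_x]$ yields $\mathbb{E}_t[(\btheta^{\top}\bx_{a_t})^2 \mid |\mathcal{A}_t|] \ge \int_0^{\lambda_x}(1 - e^{-(\lambda_x - x)^2/(2\sigma^2)})^K\,dx = \tilde\lambda_x$, as claimed.

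The main obstacle I anticipate is making the single-draw sub-Gaussian step fully rigorous: Assumption \ref{assumption3} states that $(\btheta^\top \bz)^2$ has a sub-Gaussian tail with variance proxy $\sigma^2$ for any fixed unit $\bz$, and one must argue carefully that this one-sided concentration of $Z = (\btheta^\top\bx)^2$ below its mean by a deviation of size $\lambda_x - x$ indeed gives the stated exponential bound (e.g.\ via the standard sub-Gaussian lower-tail inequality $\PP(Z \le \EE[Z] - s) \le e^{-s^2/(2\sigma^2)}$), and that the conditional independence structure in Assumption \ref{assumption3} legitimately lets the joint probability over $a \in \mathcal{A}_t$ factor even though $\mathcal{A}_t$ (and its size) may itself be random. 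A secondary, more minor point is ensuring the monotonicity/integrability needed to restrict the layer-cake integral to $[0,\lambda_x]$ and to replace $|\mathcal{A}_t|$ by its upper bound $K$ in the exponent while preserving the inequality direction.
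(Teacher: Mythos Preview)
Your proposal is correct and follows essentially the same route as the paper's proof. Both arguments (i) pass from the selected arm to the minimum over all arms in $\mathcal{A}_t$, (ii) use the i.i.d.\ structure from Assumption~\ref{assumption3} to factorize the joint tail and replace $|\mathcal{A}_t|$ by $K$, (iii) invoke the sub-Gaussian lower-tail bound $\PP(Z \le \EE[Z]-s)\le e^{-s^2/(2\sigma^2)}$ together with $\EE[Z]\ge\lambda_x$, and (iv) integrate via the layer-cake formula over $[0,\lambda_x]$. The only cosmetic difference is ordering: the paper first bounds $\EE_t[(\btheta^\top\bx_{a_t})^2]$ by $\EE_t[\min_i(\btheta^\top\bx_{t,i})^2]$ and then applies the tail integral to the minimum, whereas you apply the tail integral to $(\btheta^\top\bx_{a_t})^2$ directly and then lower bound the tail probability by the intersection event; these are equivalent since $\{\forall a:(\btheta^\top\bx_a)^2\ge x\}=\{\min_a(\btheta^\top\bx_a)^2\ge x\}$. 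The obstacles you flag are not genuine gaps: the sub-Gaussian one-sided bound is standard, the factorization is exactly what Assumption~\ref{assumption3} grants conditionally on $|\mathcal{A}_t|$, and $p^{|\mathcal{A}_t|}\ge p^K$ for $p\in[0,1]$ and $|\mathcal{A}_t|\le K$ gives the exponent replacement.
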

\begin{proof}
The proof of this lemma mainly follows the proof of Claim 1 in \cite{gentile2014online}, but with more careful analysis, since their assumption on the arm generation distribution is more stringent than our Assumption \ref{assumption3} by putting more restrictions on the variance upper bound $\sigma^2$ (specifically, they require $\sigma^2\leq\frac{\lambda^2}{8\log (4K)}$).

Denote the feasible arms at round $t$ by ${\mathcal{A}_t=\{\bx_{t,1},\bx_{t,2},\ldots,\bx_{t,\left|\mathcal{A}_t\right|}\}}$. 
Consider the corresponding i.i.d. random variables $\theta_i=(\btheta^{\top}\bx_{t,i})^2-\mathbb{E}_t[(\btheta^{\top}\bx_{t,i})^2|\left|\mathcal{A}_t\right|], i=1,2,\ldots,\left|\mathcal{A}_t\right|$. By Assumption \ref{assumption3}, $\theta_i$ s are sub-Gaussian random variables with variance bounded by $\sigma^2$. Therefore, for any $\alpha>0$ and any $i\in[\left|\mathcal{A}_t\right|]$, we have:
\begin{equation*}
    \mathbb{P}_t(\theta_i<-\alpha|\left|\mathcal{A}_t\right|)\leq e^{-\frac{\alpha^2}{2\sigma^2}}\,,
\end{equation*}
where we use $\mathbb{P}_t(\cdot)$ to be the shorthand for the conditional probability $\mathbb{P}(\cdot|(i_1,\mathcal{A}_1,r_1),\ldots,(i_{t-1},\mathcal{A}_{t-1},r_{t-1}),i_t)$.

By Assumption \ref{assumption3}, we can also get that
$\mathbb{E}_t[(\btheta^{\top}\bx_{t,i})^2|\left|\mathcal{A}_t\right|=\mathbb{E}_t[\btheta^{\top}\bx_{t,i}\bx_{t,i}^{\top}\btheta|\left|\mathcal{A}_t\right|]\geq\lambda_{\text{min}}(\mathbb{E}_{\bx\sim \rho}[\bx\bx^{\top}])\geq\lambda_x$.
With these inequalities above, we can get
\begin{equation*}
    \mathbb{P}_t(\min_{i=1,\ldots,\left|\mathcal{A}_t\right|}(\btheta^{\top}\bx_{t,i})^2\geq \lambda_x-\alpha|\left|\mathcal{A}_t\right|)\geq (1-e^{-\frac{\alpha^2}{2\sigma^2}})^K\,.
\end{equation*}

Therefore, we can get
\begin{align}
\mathbb{E}_t[(\btheta^{\top}\bx_{a_t})^2|\left|\mathcal{A}_t\right|]
&\geq\mathbb{E}_t[\min_{i=1,\ldots,\left|\mathcal{A}_t\right|}(\btheta^{\top}\bx_{t,i})^2|\left|\mathcal{A}_t\right|]\notag\\
&\geq\int_{0}^{\infty} \mathbb{P}_t (\min_{i=1,\ldots,\left|\mathcal{A}_t\right|}(\btheta^{\top}\bx_{t,i})^2\geq x|\left|\mathcal{A}_t\right|) dx\notag\\
&\geq \int_{0}^{\lambda_x} (1-e^{-\frac{(\lambda_x-x)^2}{2\sigma^2}})^{K} dx\triangleq\tilde{\lambda}_x\notag
\end{align}
\end{proof}

Note that $w_{i,s}=\min\{1,\frac{\alpha}{\norm{\bx_{a_s}}_{\bM_{i,t}^{\prime-1}}}\}$, and we have $$\frac{\alpha}{\norm{\bx_{a_s}}_{\bM_{i,t}^{\prime-1}}}=\frac{\alpha}{\sqrt{\bx_{a_s}^{\top}\bM_{i,t}^{\prime-1}\bx_{a_s}}}\geq\frac{\alpha}{\sqrt{\lambda_{\text{min}}(\bM_{i,t}^{\prime-1})}}=\alpha\sqrt{\lambda_{\text{min}}(\bM_{i,t}^{\prime})}\geq\alpha\sqrt{\lambda}.$$
Since $\alpha\sqrt{\lambda}<1$ typically holds, we have $w_{i,s}\geq\alpha\sqrt{\lambda}$.

Then, with the item regularity assumption stated in Assumption \ref{assumption3}, the technical Lemma \ref{assumption}, together with Lemma 7 in \cite{li2018online}, with probability at least $1-\delta$, for a particular user $i$, at any $t$ such that $T_{i,t}\geq\frac{16}{\tilde{\lambda}_x^2}\log(\frac{8d}{\tilde{\lambda}_x^2\delta})$, we have:
\begin{equation}
    \lambda_{\text{min}}(\bM_{i,t}^{\prime})\geq2\alpha\sqrt{\lambda}\tilde{\lambda}_x T_{i,t}+\lambda\,.
    \label{min eigen}
\end{equation}
With this result, together with Eq.(\ref{norm difference bound}), we can get that for any $t$ such that $T_{i,t}\geq\frac{16}{\tilde{\lambda}_x^2}\log(\frac{8d}{\tilde{\lambda}_x^2\delta})$, with probability at least $1-\delta$ for some $\delta\in(0,1)$, $\forall{i\in\mathcal{U}}$, we have:
\begin{align}
   \norm{\hat{\btheta}_{i,t}-\btheta^{j(i)}}_2&\leq\frac{\beta(T_{i,t},\frac{\delta}{u})}{\sqrt{\lambda_{\text{min}}(\bM_{i,t}^{\prime})}}\nonumber\\
   & \leq\frac{\beta(T_{i,t},\frac{\delta}{u})}{\sqrt{2\alpha\sqrt{\lambda}\tilde{\lambda}_x T_{i,t}+\lambda}}\nonumber\\
   &\leq \frac{\beta(T_{i,t},\frac{\delta}{u})}{\sqrt{2\alpha\sqrt{\lambda}\tilde{\lambda}_x T_{i,t}}}\nonumber\\
   &=\frac{\sqrt{2\log(\frac{u}{\delta})+d\log(1+\frac{T_{i,t}}{\lambda d})}+\sqrt{\lambda}+\alpha C}{\sqrt{2\alpha\sqrt{\lambda}\tilde{\lambda}_x T_{i,t}}}\,.\label{norm bound with min eigen}
\end{align}

Then, we want to find a sufficient time $T_{i,t}$ for a fixed user $i$ such that 
\begin{equation}
 \norm{\hat{\btheta}_{i,t}-\btheta^{j(i)}}_2<\frac{\gamma}{4}\label{T_0 gamma/4} \,. 
\end{equation} To do this, with Eq.(\ref{norm bound with min eigen}), we can get it by letting
\begin{align}
    \frac{\sqrt{\lambda}}{\sqrt{2\alpha\sqrt{\lambda}\tilde{\lambda}_x T_{i,t}}}&<\frac{\gamma}{12}\,,\label{bound time 1}\\
    \frac{\alpha C}{\sqrt{2\alpha\sqrt{\lambda}\tilde{\lambda}_x T_{i,t}}}&<\frac{\gamma}{12}\,,\label{bound time 2}\\
        \frac{\sqrt{2\log(\frac{u}{\delta})+d\log(1+\frac{T_{i,t}}{\lambda d})}}{\sqrt{2\alpha\sqrt{\lambda}\tilde{\lambda}_x T_{i,t}}}&<\frac{\gamma}{12}\label{bound time 3}\,.
\end{align}
For Eq.(\ref{bound time 1}), we can get
\begin{equation}
    T_{i,t}>\frac{72\sqrt{\lambda}}{\alpha\gamma^2\tilde{\lambda}_x}\label{bound time result1}\,.
\end{equation}
For Eq.(\ref{bound time 2}), we can get
\begin{equation}
    T_{i,t}>\frac{72\alpha C^2}{\gamma^2\sqrt{\lambda}\tilde{\lambda}_x}\,.\label{bound time result2}
\end{equation}
For Eq.(\ref{bound time 3}), we have
\begin{equation}
\frac{2\log(\frac{u}{\delta})+d\log(1+\frac{T_{i,t}}{\lambda d})}{2\alpha\sqrt{\lambda}\tilde{\lambda}_x T_{i,t}}<\frac{\gamma^2}{144}\,. \label{time bound 3 equivalent}
\end{equation}
Then it is sufficient to get Eq.(\ref{time bound 3 equivalent}) if the following holds
\begin{align}
    \frac{2\log(\frac{u}{\delta})}{2\alpha\sqrt{\lambda}\tilde{\lambda}_x T_{i,t}}&<\frac{\gamma^2}{288}\label{time bound 4}\,,\\
    \frac{d\log(1+\frac{T_{i,t}}{\lambda d})}{2\alpha\sqrt{\lambda}\tilde{\lambda}_x T_{i,t}}&<\frac{\gamma^2}{288}\label{time bound 5}\,.
\end{align}
For Eq.(\ref{time bound 4}), we can get 
\begin{equation}
    T_{i,t}>\frac{288\log(\frac{u}{\delta})}{\gamma^2\alpha\sqrt{\lambda}\tilde{\lambda}_x }\label{time bound result 3.1}
\end{equation}
For Eq.(\ref{time bound 5}), we can get 
\begin{equation}
    T_{i,t}>\frac{144d}{\gamma^2\alpha\sqrt{\lambda}\tilde{\lambda}_x}\log(1+\frac{T_{i,t}}{\lambda d})\,.\label{bound time 6}
\end{equation}
Following Lemma 9 in \cite{li2018online}, we can get the following sufficient condition for Eq.(\ref{bound time 6}):
\begin{equation}
    T_{i,t}>\frac{288d}{\gamma^2\alpha\sqrt{\lambda}\tilde{\lambda}_x}\log(\frac{288}{\gamma^2\alpha\sqrt{\lambda}\tilde{\lambda}_x})\,.\label{bound time result 4}
\end{equation}
Then, since typically $\frac{u}{\delta}>\frac{288}{\gamma^2\alpha\sqrt{\lambda}\tilde{\lambda}_x}$, we can get the following sufficient condition for Eq.(\ref{time bound result 3.1}) and Eq.(\ref{bound time result 4})
\begin{equation}
    \label{final time bound 1}
    T_{i,t}>\frac{288d}{\gamma^2\alpha\sqrt{\lambda}\tilde{\lambda}_x}\log(\frac{u}{\delta})\,.
\end{equation}
Together with Eq.(\ref{bound time result1}), Eq.(\ref{bound time result2}), and the condition for Eq.(\ref{min eigen}) we can get the following sufficient condition for Eq.(\ref{T_0 gamma/4}) to hold
\begin{equation}
    T_{i,t}>\max\{\frac{288d}{\gamma^2\alpha\sqrt{\lambda}\tilde{\lambda}_x}\log(\frac{u}{\delta}), \frac{16}{\tilde{\lambda}_x^2}\log(\frac{8d}{\tilde{\lambda}_x^2\delta}),\frac{72\sqrt{\lambda}}{\alpha\gamma^2\tilde{\lambda}_x},\frac{72\alpha C^2}{\gamma^2\sqrt{\lambda}\tilde{\lambda}_x}\}\,.\label{final T_0 for a single user}
\end{equation}
Then, with Assumption \ref{assumption2} on the uniform arrival of users, following Lemma 8 in \cite{li2018online}, and by union bound, we can get that with probability at least $1-\delta$, for all
\begin{equation}
 t\geq T_0\triangleq16u\log(\frac{u}{\delta})+4u\max\{\frac{288d}{\gamma^2\alpha\sqrt{\lambda}\tilde{\lambda}_x}\log(\frac{u}{\delta}), \frac{16}{\tilde{\lambda}_x^2}\log(\frac{8d}{\tilde{\lambda}_x^2\delta}),\frac{72\sqrt{\lambda}}{\alpha\gamma^2\tilde{\lambda}_x},\frac{72\alpha C^2}{\gamma^2\sqrt{\lambda}\tilde{\lambda}_x}\}\,,
\end{equation}
Eq.(\ref{final time bound 1}) holds for all $i\in \mathcal{U}$, and therefore Eq.(\ref{T_0 gamma/4}) holds for all $i\in\mathcal{U}$. With this, we can show that RCLUB-WCU will cluster all the users correctly after $T_0$. First, if RCLUB-WCU deletes the edge $(i,l)$, then user $i$ and user $j$ belong to different ground-truth clusters, i.e., $\norm{\btheta_i-\btheta_l}_2>0$. This is because by the deletion rule of the algorithm, the concentration bound, and triangle inequality,
$\norm{\btheta_i-\btheta_l}_2=\norm{\btheta^{j(i)}-\btheta^{j(l)}}_2\geq\norm{\hat{\btheta}_{i,t}-\hat{\btheta}_{l,t}}_2-\norm{\btheta^{j(l)}-\btheta_{l,t}}_2-\norm{\btheta^{j(i)}-\btheta_{i,t}}_2>0$. Second, we show that if $\norm{\btheta_i-\btheta_l}\geq\gamma$, RCLUB-WCU will delete the edge $(i,l)$. This is because if $\norm{\btheta_i-\btheta_l}\geq\gamma$, then by the triangle inequality, and $\norm{\hat{\btheta}_{i,t}-\btheta^{j(i)}}_2<\frac{\gamma}{4}$, $\norm{\hat{\btheta}_{l,t}-\btheta^{j(l)}}_2<\frac{\gamma}{4}$, $\btheta_i=\btheta^{j(i)}$, $\btheta_l=\btheta^{j(l)}$, we have $\norm{\hat{\btheta}_{i,t}-\hat{\btheta}_{l,t}}_2\geq\norm{\btheta_i-\btheta_l}-\norm{\hat{\btheta}_{i,t}-\btheta^{j(i)}}_2-\norm{\hat{\btheta}_{l,t}-\btheta^{j(l)}}_2>\gamma-\frac{\gamma}{4}-\frac{\gamma}{4}=\frac{\gamma}{2}>\frac{\sqrt{\lambda}+\sqrt{2\log(\frac{u}{\delta})+d\log(1+\frac{T_{i,t}}{\lambda d})}}{\sqrt{\lambda+2\tilde{\lambda}_x T_{i,t}}}+\frac{\sqrt{\lambda}+\sqrt{2\log(\frac{u}{\delta})+d\log(1+\frac{T_{l,t}}{\lambda d})}}{\sqrt{\lambda+2\tilde{\lambda}_x T_{l,t}}}$, which will trigger the deletion condition Line \ref{alg:delete} in Algo.\ref{club-rac}.
\section{Proof of Lemma \ref{concentration bound}}
\label{appendix: proof of concentration bound}
After $T_0$, if the clustering structure is correct, \emph{i.e.}, $V_{t}=V_{j(i_t)}$, then we have
\begin{align}
    \hat{\boldsymbol{\theta}}_{V_{t},t-1} - \boldsymbol{\theta}_{i_t}&=\boldsymbol{M}_{{V}_{t},t-1}^{-1}\boldsymbol{b}_{{V}_{t},t-1}- \boldsymbol{\theta}_{i_t}\nonumber\\
    &=(\lambda\bI+\sum_{s\in[t-1]\atop i_s\in V_t}w_{i_s,s}\bx_{a_s}\bx_{a_s}^{\top})^{-1}(\sum_{s\in[t-1]\atop i_s\in V_t}w_{i_s,s}\bx_{a_s}r_s)-\btheta_{i_t}\nonumber\\
    &=(\lambda\bI+\sum_{s\in[t-1]\atop i_s\in V_t}w_{i_s,s}\bx_{a_s}\bx_{a_s}^{\top})^{-1}\big(\sum_{s\in[t-1]\atop i_s\in V_t}w_{i_s,s}\bx_{a_s}(\bx_{a_s}^{\top}\btheta_{i_t}+\eta_s+c_s)\big)-\btheta_{i_t}\label{true cluster}\\
    &=(\lambda\bI+\sum_{s\in[t-1]\atop i_s\in V_t}w_{i_s,s}\bx_{a_s}\bx_{a_s}^{\top})^{-1}\bigg(\sum_{s\in[t-1]\atop i_s\in V_t}(w_{i_s,s}\bx_{a_s}\bx_{a_s}^{\top}+\lambda\bI)\btheta_{i_t}-\lambda\btheta_{i_t}\nonumber\\
    &\quad\quad+\sum_{s\in[t-1]\atop i_s\in V_t}w_{i_s,s}\bx_{a_s}\eta_s+\sum_{s\in[t-1]\atop i_s\in V_t}w_{i_s,s}\bx_{a_s}c_s)\bigg)-\btheta_{i_t}\nonumber\\
    &=-\lambda\boldsymbol{M}_{{V}_{t},t-1}^{\prime-1}\btheta_{i_t}-\boldsymbol{M}_{{V}_{t},t-1}^{\prime-1}\sum_{s\in[t-1]\atop i_s\in V_t}w_{i_s,s}\bx_{a_s}\eta_s+\boldsymbol{M}_{{V}_{t},t-1}^{\prime-1}\sum_{s\in[t-1]\atop i_s\in V_t}w_{i_s,s}\bx_{a_s}c_s\nonumber\,,
\end{align}
where we denote $\boldsymbol{M}_{{V}_{t},t-1}^\prime=\boldsymbol{M}_{{V}_{t},t-1}+\lambda\bI$, and Eq.(\ref{true cluster}) is because $V_t=V_{j(i_t)}$ thus $\btheta_{i_s}=\btheta_{i_t}, \forall i_s\in V_t$.

Therefore, we have 
\begin{align}
\left|\bx_{a}^{\top}(\hat{\boldsymbol{\theta}}_{V_{t},t-1} - \boldsymbol{\theta}_{i_t})\right|&\leq \lambda\left|\bx_{a}^{\top}\boldsymbol{M}_{{V}_{t},t-1}^{\prime-1}\btheta_{i_t}\right|+\left|\bx_{a}^{\top}\boldsymbol{M}_{{V}_{t},t-1}^{\prime-1}\sum_{s\in[t-1]\atop i_s\in V_t}w_{i_s,s}\bx_{a_s}\eta_s\right|+\left|\bx_a^{\top}\boldsymbol{M}_{{V}_{t},t-1}^{\prime-1}\sum_{s\in[t-1]\atop i_s\in V_t}w_{i_s,s}\bx_{a_s}c_s\right|\nonumber\\
&\leq\norm{\bx_{a}}_{\boldsymbol{M}_{{V}_{t},t-1}^{\prime-1}} \bigg(\sqrt{\lambda}+\norm{\sum_{s\in[t-1]\atop i_s\in V_t}w_{i_s,s}\bx_{a_s}\eta_s}_{\boldsymbol{M}_{{V}_{t},t-1}^{\prime-1}}
+\norm{\sum_{s\in[t-1]\atop i_s\in V_t}w_{i_s,s}\bx_{a_s}c_s}_{\boldsymbol{M}_{{V}_{t},t-1}^{\prime-1}}\bigg)\label{operator norm cauchy again231}\,,
\end{align}
where Eq.(\ref{operator norm cauchy again231}) is by Cauchy–Schwarz inequality, matrix operator inequality, and $\left|\bx_{a}^{\top}\boldsymbol{M}_{{V}_{t},t-1}^{\prime-1}\btheta_{i_t}\right|\leq\lambda\norm{\boldsymbol{M}_{{V}_{t},t-1}^{\prime-\frac{1}{2}}}_2\norm{\btheta_{i_t}}_2=\lambda\frac{1}{\sqrt{\lambda_{\text{min}}(\boldsymbol{M}_{{V}_{t},t-1})}}\norm{\btheta_{i_t}}_2\leq\sqrt{\lambda}$ since $\lambda_{\text{min}}(\boldsymbol{M}_{{V}_{t},t-1})\geq\lambda$ and $\norm{\btheta_{i_t}}_2\leq1$.

Let $\Tilde{\bx}_{s}\triangleq \sqrt{w_{i_s,s}}\bx_{a_s}$, $\Tilde{\eta}_{s}\triangleq \sqrt{w_{i_s,s}}\eta_s$, then we have: $\norm{\Tilde{\bx}_{s}}_2\leq\norm{\sqrt{w_{i_s,s}}}_2\norm{\bx_{a_s}}_2\leq1$, $\Tilde{\eta}_{s}$ is still 1-sub-gaussian (since $\eta_s$ is 1-sub-gaussian and $\sqrt{w_{i_s,s}}\leq1$), ${\bM_{i,t}^{\prime}}=\lambda\bI+\sum_{s\in[t]\atop i_s=i}\Tilde{\bx}_{s}\Tilde{\bx}_{s}^{\top}$, and $\norm{\sum_{s\in[t-1]\atop i_s\in V_t}w_{i_s,s}\bx_{a_s}\eta_s}_{\boldsymbol{M}_{{V}_{t},t-1}^{\prime-1}}$ becomes $\norm{\sum_{s\in[t]\atop i_s=i}\Tilde{\bx}_{s}\Tilde{\eta}_{s}}_{\boldsymbol{M}_{{V}_{t},t-1}^{\prime-1}}$. Then, following Theorem 1 in \cite{abbasi2011improved}, with probability at least $1-\delta$ for some $\delta\in(0,1)$, we have:
\begin{align}
    \norm{\sum_{s\in[t-1]\atop i_s\in V_t}w_{i_s,s}\bx_{a_s}\eta_s}_{\boldsymbol{M}_{{V}_{t},t-1}^{\prime-1}}&=\norm{\sum_{s\in[t]\atop i_s=i}\Tilde{\bx}_{s}\Tilde{\eta}_{s}}_{\boldsymbol{M}_{{V}_{t},t-1}^{\prime-1}}\nonumber\\
    &\leq\sqrt{2\log(\frac{u}{\delta})+\log(\frac{\text{det}(\boldsymbol{M}_{{V}_{t},t-1}^{\prime})}{\text{det}(\lambda\bI)})}\nonumber\\
    &\leq \sqrt{2\log(\frac{u}{\delta})+d\log(1+\frac{T}{\lambda d})}\,,\label{result 231123123}
\end{align}
And for $\norm{\sum_{s\in[t-1]\atop i_s\in V_t}w_{i_s,s}\bx_{a_s}c_s}_{\boldsymbol{M}_{{V}_{t},t-1}^{\prime-1}}$, we have 
\begin{align}
    \norm{\sum_{s\in[t-1]\atop i_s\in V_t}w_{i_s,s}\bx_{a_s}c_s}_{\boldsymbol{M}_{{V}_{t},t-1}^{\prime-1}}\leq \sum_{s\in[t-1]\atop i_s\in V_t}w_{i_s,s}\left|c_s\right|\norm{\bx_{a_s}}_{\boldsymbol{M}_{{V}_{t},t-1}^{\prime-1}}\leq \alpha C\,\label{result 123},
\end{align}
where we use $\sum_{t=1}^{T}\left|c_t\right|\leq C$, $w_{i_s,s}\leq\frac{\alpha}{\norm{\bx_{a_s}}_{\boldsymbol{M}_{i_s,t-1}^{\prime-1}}}\leq\frac{\alpha}{\norm{\bx_{a_s}}_{\boldsymbol{M}_{{V}_{t},t-1}^{\prime-1}}}$\,.

Plugging Eq.(\ref{result 123}) and Eq.(\ref{result 231123123}) into Eq.(\ref{operator norm cauchy again231}), together with Lemma \ref{sufficient time}, we can complete the proof of Lemma \ref{concentration bound}.
\section{Proof of Theorem \ref{thm:main}}
\label{appendix: proof of upper bound}
After $T_0$, we define event
\begin{equation}
    \cE = \{ \text{the algorithm clusters all the users correctly for all } t \geq T_0\}\,.
\end{equation}

Then, with Lemma \ref{sufficient time} and picking $\delta=\frac{1}{T}$, we have
    \begin{equation}
\begin{aligned}
    R(T)&=\PP(\cE)\mathbb{I}\{\cE\}R(T)+\PP(\overline{\cE})\mathbb{I}\{\overline{\cE}\}R(T)\\
    &\leq \mathbb{I}\{\cE\}R(T)+ 4\times\frac{1}{T}\times T\\
    &=\mathbb{I}\{\cE\}R(T)+4\,.\label{regret result 1}
\end{aligned}
\end{equation}
Then it remains to bound $\mathbb{I}\{\cE\}R(T)$. For the first $T_0$ rounds, we can upper bound the regret in the first $T_0$ rounds by $T_0$. After $T_0$, under event $\cE$ and by Lemma \ref{concentration bound}, we have that with probability at least $1-\delta$, for any $\bx_a$:
\begin{equation}
    \left|\boldsymbol{x}_{a}^{\mathrm{T}}(\hat{\boldsymbol{\theta}}_{V_{t},t-1} - \boldsymbol{\theta}_{i_t})\right| \le \beta\norm{\boldsymbol{x_{a}}}_{\boldsymbol{M}^{-1}_{V_t, t-1}}\triangleq C_{a,t}\,.\label{use of con}
\end{equation}

Therefore, for the instantaneous regret $R_t$ at round $t$, with $\cE$, with probability at least $1-\delta$, at $\forall{t\geq T_0}$:
    \begin{equation}
    \begin{aligned}
    R_t&=\bx_{a_t^*}^{\top}\btheta_{i_t}-\bx_{a_t}^{\top}\btheta_{i_t}\\
    &=\bx_{a_t^*}^{\top}(\btheta_{i_t}-\hat{\boldsymbol{\theta}}_{V_{t},t-1})+(\bx_{a_t^*}^{\top}\hat{\boldsymbol{\theta}}_{V_{t},t-1}+C_{a_t^*,t})-(\bx_{a_t}^{\top}\hat{\boldsymbol{\theta}}_{V_{t},t-1}+C_{a_t,t})\\
    &\quad+\bx_{a_t}^{\top}(\hat{\btheta}_{\overline{V}_t,t-1}-\btheta_{i_t})+C_{a_t,t}-C_{a_t^*,t}\\
    &\leq 2C_{a_t,t}\,,
    \end{aligned}
    \label{bound r_t}
\end{equation}
where the last inequality holds by the UCB arm selection strategy in Eq.(\ref{UCB}) and Eq.(\ref{use of con}).

Therefore, for $\mathbb{I}\{\cE\}R(T)$:
    \begin{align}
        \mathbb{I}\{\cE\}R(T)&\leq R(T_0)+\EE [\mathbb{I}\{\cE\}\sum_{t=T_0+1}^{T}R_t]\nonumber\\
    &\leq T_0+2\EE [\mathbb{I}\{\cE\}\sum_{t=T_0+1}^{T}C_{a_t,t}]\,.\label{regret result 2}
\end{align}
Then it remains to bound $\EE [\mathbb{I}\{\cE\}\sum_{t=T_0+1}^{T}C_{a_t,t}]$.
For $\sum_{t=T_0+1}^{T}C_{a_t,t}$, we can distinguish it into two cases:
\begin{align}
\sum_{t=T_0+1}^{T}C_{a_t,t}&\leq\beta\sum_{t=1}^{T}\norm{\boldsymbol{x_{a_t}}}_{\boldsymbol{M}^{-1}_{V_t, t-1}}\nonumber\\
&=\beta\sum_{t\in[T]:w_{i_t,t}=1}\norm{\boldsymbol{x_{a_t}}}_{\boldsymbol{M}^{-1}_{V_t, t-1}}+\beta\sum_{t\in[T]:w_{i_t,t}<1}\norm{\boldsymbol{x_{a_t}}}_{\boldsymbol{M}^{-1}_{V_t, t-1}}\,.\label{regret decompose 2 cases}
\end{align}
Then, we prove the following technical lemma.
\begin{lemma}
\label{technical lemma6}
\label{lemma:4}
    \begin{equation}
    \sum_{t=T_0+1}^{T}\min\{\mathbb{I}\{i_t\in V_j\}\norm{\boldsymbol{x}_{a_t}}_{\boldsymbol{M}_{V_j,t-1}^{-1}}^2,1\}\leq2d\log(1+\frac{T}{\lambda d}), \forall{j\in[m]}\,.
\end{equation}\end{lemma}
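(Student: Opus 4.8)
The plan is to apply the standard elliptical potential (a.k.a. ``log-determinant'') lemma, but adapted to the cluster-level design matrices $\boldsymbol{M}_{V_j,t-1}$. First I would fix an arbitrary cluster index $j\in[m]$ and restrict attention to the rounds $t$ in which the served user lies in $V_j$, i.e. the rounds with $\mathbb{I}\{i_t\in V_j\}=1$; on all other rounds the summand is $0$ and contributes nothing. For the retained rounds, note that the cluster matrix is updated by the rank-one term $w_{i_t,t-1}\boldsymbol{x}_{a_t}\boldsymbol{x}_{a_t}^{\top}$ (this is exactly the update in Algo.\ref{club-rac} Line~\ref{alg: update1}, aggregated into $\boldsymbol{M}_{V_j,t}$), so $\boldsymbol{M}_{V_j,t}=\boldsymbol{M}_{V_j,t-1}+w_{i_t,t-1}\boldsymbol{x}_{a_t}\boldsymbol{x}_{a_t}^{\top}$ on those rounds. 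I will need to be slightly careful because the summand above uses the \emph{unweighted} quadratic form $\norm{\boldsymbol{x}_{a_t}}_{\boldsymbol{M}_{V_j,t-1}^{-1}}^2$ whereas the matrix grows by the \emph{weighted} outer product; since $w_{i_t,t-1}\in(0,1]$, the determinant increment $\det(\boldsymbol{M}_{V_j,t})/\det(\boldsymbol{M}_{V_j,t-1})=1+w_{i_t,t-1}\norm{\boldsymbol{x}_{a_t}}_{\boldsymbol{M}_{V_j,t-1}^{-1}}^2$ dominates $1+w_{i_t,t-1}\min\{\norm{\boldsymbol{x}_{a_t}}_{\boldsymbol{M}_{V_j,t-1}^{-1}}^2,1\}$, and I will want a bound that does not degrade with $w$.

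The key steps, in order: (1) use the elementary inequality $\min\{x,1\}\le 2\log(1+x)$ for $x\ge0$ to pass from $\min\{w\,\norm{\boldsymbol{x}_{a_t}}^2_{\boldsymbol{M}^{-1}},1\}$ — or directly from $\min\{\norm{\boldsymbol{x}_{a_t}}^2_{\boldsymbol{M}^{-1}},1\}$, handling the weight by monotonicity — to $2\log\bigl(1+w_{i_t,t-1}\norm{\boldsymbol{x}_{a_t}}^2_{\boldsymbol{M}_{V_j,t-1}^{-1}}\bigr)$; (2) identify this log term with $\log\det(\boldsymbol{M}_{V_j,t})-\log\det(\boldsymbol{M}_{V_j,t-1})$ via the matrix determinant lemma; (3) telescope the sum over the retained rounds $t\in\{T_0+1,\dots,T\}$ to get $2\bigl(\log\det(\boldsymbol{M}_{V_j,T})-\log\det(\boldsymbol{M}_{V_j,T_0})\bigr)\le 2\log\det(\boldsymbol{M}_{V_j,T})-2\log\det(\lambda\boldsymbol{I})$, using $\boldsymbol{M}_{V_j,T_0}\succeq\lambda\boldsymbol{I}$; (4) bound $\det(\boldsymbol{M}_{V_j,T})$ by AM–GM on its eigenvalues: $\det(\boldsymbol{M}_{V_j,T})\le\bigl(\trace(\boldsymbol{M}_{V_j,T})/d\bigr)^d\le\bigl((\lambda d+\sum_{t}w_{i_t,t-1}\norm{\boldsymbol{x}_{a_t}}_2^2)/d\bigr)^d\le\bigl((\lambda d+T)/d\bigr)^d$, since $w\le1$, $\norm{\boldsymbol{x}_{a_t}}_2\le1$, and at most $T$ rounds contribute; (5) conclude $2\log\bigl(\det(\boldsymbol{M}_{V_j,T})/\lambda^d\bigr)\le 2d\log(1+T/(\lambda d))$, and note the bound is uniform in $j$.

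The main obstacle — really the only subtlety — is reconciling the unweighted quadratic form in the statement with the weighted rank-one update of $\boldsymbol{M}_{V_j,t}$. The cleanest way around it: since $w_{i_t,t-1}\le1$, we have $\min\{\norm{\boldsymbol{x}_{a_t}}_{\boldsymbol{M}_{V_j,t-1}^{-1}}^2,1\}\le \tfrac{1}{w_{i_t,t-1}}\min\{w_{i_t,t-1}\norm{\boldsymbol{x}_{a_t}}_{\boldsymbol{M}_{V_j,t-1}^{-1}}^2,1\}$ is \emph{not} quite what we want because of the $1/w$ factor; instead I would observe that the quantity we actually control in the regret proof (from Eq.(\ref{regret decompose 2 cases})) is split by whether $w_{i_t,t}=1$, and on the branch where weights are genuinely below $1$ a separate argument bounds the count; on the branch $w_{i_t,t}=1$ the update is unweighted and the determinant identity applies verbatim. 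So in the statement of Lemma~\ref{lemma:4} the relevant instances have $w=1$, or one simply works with the weighted matrix throughout and uses $\min\{x,1\}\le\min\{wx,1\}/w$ only when $w$ is bounded below (which holds since $w_{i,s}\ge\alpha\sqrt{\lambda}$ as shown in Appendix~\ref{appendix: proof of sufficient time}), absorbing the constant. I expect the author's proof to simply invoke the weighted elliptical-potential bound directly and cite Lemma~11 of Abbasi-Yadkori et al.\ or an analogue from \cite{li2018online}; I would present it self-contained via steps (1)–(5) above.
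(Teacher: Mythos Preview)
Your proposal is correct and follows the same elliptical-potential argument as the paper: matrix determinant lemma for the rank-one update, the inequality $x\le 2\log(1+x)$ on $[0,1]$ applied to $\min\{\cdot,1\}$, telescoping, and the AM--GM/trace bound on $\det(\boldsymbol{M}_{V_j,T})$. The one place you overcomplicate things is the weighted-vs-unweighted issue: the paper's proof simply takes the update to be $\boldsymbol{M}_{V_j,t}=\boldsymbol{M}_{V_j,t-1}+\mathbb{I}\{i_t\in V_j\}\boldsymbol{x}_{a_t}\boldsymbol{x}_{a_t}^{\top}$ (unweighted) and proceeds directly, and in the downstream applications the lemma is invoked on the auxiliary Gram matrices $\tilde{\bG}$ and $\bG'$ whose rank-one increments are constructed to match this form, so your proposed workarounds (splitting on $w=1$, or invoking $w\ge\alpha\sqrt{\lambda}$) are unnecessary.
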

\begin{proof}
\begin{align}
    det(\boldsymbol{M}_{V_j,T})
    &=det\bigg(\boldsymbol{M}_{V_j,T-1}+\mathbb{I}\{i_T\in V_j\}\boldsymbol{x}_{a_T}\boldsymbol{x}_{a_T}^{\top}\bigg)\nonumber\\
    &=det(\boldsymbol{M}_{V_j,T-1})det\bigg(\boldsymbol{I}+\mathbb{I}\{i_T\in V_j\}\boldsymbol{M}_{V_j,T-1}^{-\frac{1}{2}}\boldsymbol{x}_{a_T}\boldsymbol{x}_{a_T}^{\top}\boldsymbol{M}_{V_j,T-1}^{-\frac{1}{2}}\bigg)\nonumber\\
    &=det(\boldsymbol{M}_{V_j,T-1})\bigg(1+\mathbb{I}\{i_T\in V_j\}\norm{\boldsymbol{x}_{a_T}}_{\boldsymbol{M}_{V_j,T-1}^{-1}}^2\bigg)\nonumber\\
    &=det(\boldsymbol{M}_{V_j,T_0})\prod_{t=T_0+1}^{T}\bigg(1+\mathbb{I}\{i_t\in V_j\}\norm{\boldsymbol{x}_{a_t}}_{\boldsymbol{M}_{V_j,t-1}^{-1}}^2\bigg)\nonumber\\
    &\geq det(\lambda\boldsymbol{I})\prod_{t=T_0+1}^{T}\bigg(1+\mathbb{I}\{i_t\in V_j\}\norm{\boldsymbol{x}_{a_t}}_{\boldsymbol{M}_{V_j,t-1}^{-1}}^2\bigg)\label{det recursive}\,.
\end{align}

$\forall{x\in[0,1]}$, we have $x\leq 2\log(1+x)$. Therefore
\begin{align}
    \sum_{t=T_0+1}^{T}\min\{\mathbb{I}\{i_t\in V_j\}\norm{\boldsymbol{x}_{a_t}}_{\boldsymbol{M}_{V_j,t-1}^{-1}}^2,1\}
    &\leq 2\sum_{t=T_0+1}^{T} \log\bigg(1+\mathbb{I}\{i_t\in V_j\}\norm{\boldsymbol{x}_{a_t}}_{\boldsymbol{M}_{V_j,t-1}^{-1}}^2\bigg)\nonumber\\
    &=2\log\bigg(\prod_{t=T_0+1}^{T}\big(1+\mathbb{I}\{i_t\in V_j\}\norm{\boldsymbol{x}_{a_t}}_{\boldsymbol{M}_{V_j,t-1}^{-1}}^2\big)\bigg)\nonumber\\
    &\leq 2[\log(det(\boldsymbol{M}_{V_j,T}))-\log(det(\lambda\boldsymbol{I}))]\nonumber\\
    &\leq 2\log\bigg(\frac{trace(\lambda\boldsymbol{I}+\sum_{t=1}^T\mathbb{I}\{i_t\in V_j\}\boldsymbol{x}_{a_t}\boldsymbol{x}_{a_t}^{\top})}{\lambda d}\bigg)^d\nonumber\\
    &\leq 2d \log(1+\frac{T}{\lambda d})\,.
\end{align}
\end{proof}

Denote the rounds with $w_{i_t,t}=1$ as $\{\tilde{t}_1,\ldots,\tilde{t}_{l_1}\}$, and gram matrix $\tilde{\bG}_{V_{\tilde{t}_{\tau}}, \tilde{t}_{\tau}-1}\triangleq\lambda\bI+\sum_{s\in[\tau]\atop i_s\in V_{\tilde{t}_{\tau}}}\bx_{a_{\tilde{t}_{s}}}\bx_{a_{\tilde{t}_{s}}}^{\top}$; denote the rounds with $w_{i_t,t}<1$ as $\{{t}^{\prime}_1,\ldots,{t}^{\prime}_{l_2}\}$, gram matrix ${\bG}^{\prime}_{V_{t^{\prime}_{\tau}},t^{\prime}_{\tau}-1}\triangleq\lambda\bI+\sum_{s\in[\tau]\atop i_s\in V_{{t}^{\prime}_{\tau}}}w_{i_{{t}^{\prime}_{s}},{t}^{\prime}_{s}}\bx_{a_{{t}^{\prime}_{s}}}\bx_{a_{{t}^{\prime}_{s}}}^{\top}$. 

Then we have
\begin{align}
\sum_{t\in[T]:w_{i_t,t}=1}\norm{\boldsymbol{x_{a_t}}}_{\boldsymbol{M}^{-1}_{V_t, t-1}}&=\sum_{j=1}^m\sum_{\tau=1}^{l_1}\mathbb{I}\{i_{\tilde{t}_{\tau}}\in V_j\}\norm{\boldsymbol{x_{a_{\tilde{t}_{\tau}}}}}_{\boldsymbol{M}^{-1}_{V_{\tilde{t}_{\tau}, \tilde{t}_{\tau}-1}}}\leq\sum_{j=1}^m\sum_{\tau=1}^{l_1}\mathbb{I}\{i_{\tilde{t}_{\tau}}\in V_j\}\norm{\boldsymbol{x_{a_{\tilde{t}_{\tau}}}}}_{\tilde{\bG}^{-1}_{V_{\tilde{t}_{\tau}}, \tilde{t}_{\tau}-1}}\label{matrix psd 1}\\
&\leq\sum_{j=1}^m\sqrt{\sum_{\tau=1}^{l_1}\mathbb{I}\{i_{\tilde{t}_{\tau}}\in V_j\}\sum_{\tau=1}^{l_1}\min\{1,\mathbb{I}\{i_{\tilde{t}_{\tau}}\in V_j\} \norm{\boldsymbol{x_{a_{\tilde{t}_{\tau}}}}}^2_{\tilde{\bG}^{-1}_{V_{\tilde{t}_{\tau}}, \tilde{t}_{\tau}-1}}\}}\label{cauchy final 1}\\
    &\leq \sum_{j=1}^m\sqrt{T_{V_j,T}\times2d \log(1+\frac{T}{\lambda d})}\label{lemma technical apply 1}\\
    &\leq\sqrt{2m\sum_{j=1}^m T_{V_j,T}d \log(1+\frac{T}{\lambda d})}=\sqrt{2mdT\log(1+\frac{T}{\lambda d})}\label{last eq}\,,
\end{align}
where Eq.(\ref{matrix psd 1}) is because $\tilde{\bG}^{-1}_{V_{\tilde{t}_{\tau}}, \tilde{t}_{\tau}-1}\succeq\boldsymbol{M}^{-1}_{V_{\tilde{t}_{\tau}}, \tilde{t}_{\tau}-1}$ in Eq.(\ref{cauchy final 1}) we use Cauchy–Schwarz inequality, in Eq.(\ref{lemma technical apply 1}) we use Lemma \ref{technical lemma6} and $\sum_{\tau=1}^{l_1}\mathbb{I}\{i_{\tilde{t}_{\tau}}\in V_j\}\leq T_{V_j,T}$, in Eq.(\ref{last eq}) we use Cauchy–Schwarz inequality and $\sum_{j=1}^m T_{V_j,T}=T$.

For the second part in Eq.(\ref{regret decompose 2 cases}), Let ${\bx}_{a_{{t}^{\prime}_{\tau}}}^{\prime}\triangleq \sqrt{w_{i_{{t}^{\prime}_{\tau}},{t}^{\prime}_{\tau}}}\bx_{a_{{t}^{\prime}_{\tau}}}$, then
    \begin{align}
   \sum_{t:w_{i_t,t}<1}\norm{\boldsymbol{x_{a_t}}}_{\boldsymbol{M}^{-1}_{V_t, t-1}}&=\sum_{t:w_{i_t,t}<1}\frac{\norm{\boldsymbol{x_{a_t}}}_{\boldsymbol{M}^{-1}_{V_t, t-1}}^2}{\norm{\boldsymbol{x_{a_t}}}_{\boldsymbol{M}^{-1}_{V_t, t-1}}}=\sum_{t:w_{i_t,t}<1}\frac{w_{i_t,t}\norm{\boldsymbol{x_{a_t}}}_{\boldsymbol{M}^{-1}_{V_t, t-1}}^2}{\alpha}\label{def of weight use1}\\
   &=\sum_{j=1}^m\sum_{\tau=1}^{l_2} \mathbb{I}\{i_{{t}^{\prime}_{\tau}}\in V_j\}\frac{w_{i_{{t}^{\prime}_{\tau}},{t}^{\prime}_{\tau}}}{\alpha}\norm{\bx_{a_{{t}^{\prime}_{\tau}}}}_{{\bM}^{-1}_{V_{t^{\prime}_{\tau}},t^{\prime}_{\tau}-1}}^2\nonumber\\
   &\leq \sum_{j=1}^m \frac{\sum_{\tau=1}^{l_2}\min\{1,\mathbb{I}\{i_{{t}^{\prime}_{\tau}}\in V_j\}\norm{\bx_{a_{{t}^{\prime}_{\tau}}}^{\prime}}_{{\bG}^{\prime -1}_{V_{t^{\prime}_{\tau}},t^{\prime}_{\tau}-1}}^2\}}{\alpha}\label{matrix psd use 010}\\
   &\leq \sum_{j=1}^m \frac{2d\log(1+\frac{T}{\lambda d})}{\alpha}=\frac{2md\log(1+\frac{T}{\lambda d})}{\alpha}\label{use of tech 213}
\end{align}
where in Eq.(\ref{def of weight use1}) we use the definition of the weights, in Eq.(\ref{matrix psd use 010}) we use ${\bG}^{\prime -1}_{V_{t^{\prime}_{\tau}},t^{\prime}_{\tau}-1}\succeq{\bM}^{-1}_{V_{t^{\prime}_{\tau}},t^{\prime}_{\tau}-1}$, and Eq.(\ref{use of tech 213}) uses Lemma \ref{technical lemma6}.

Then, with Eq.(\ref{use of tech 213}), Eq.(\ref{last eq}), Eq.(\ref{regret decompose 2 cases}), Eq.(\ref{regret result 1}), Eq.(\ref{regret result 2}), $\delta=\frac{1}{T}$, and $\beta=\sqrt{\lambda}+\sqrt{2\log(T)+d\log(1+\frac{T}{\lambda d})}+\alpha C$, we can get 
\begin{align}
    R(T)&\leq 4+ T_0+\big(2\sqrt{\lambda}+\sqrt{2\log(T)+d\log(1+\frac{T}{\lambda d})}+\alpha C\big) \times \bigg(\sqrt{2mdT\log(1+\frac{T}{\lambda d})}\nonumber\\
    &\quad\quad+\frac{2md\log(1+\frac{T}{\lambda d})}{\alpha}\bigg)\nonumber\\
    &=4+16u\log(uT)+4u\max\{\frac{288d}{\gamma^2\alpha\sqrt{\lambda}\tilde{\lambda}_x}\log(uT), \frac{16}{\tilde{\lambda}_x^2}\log(\frac{8dT}{\tilde{\lambda}_x^2}),\frac{72\sqrt{\lambda}}{\alpha\gamma^2\tilde{\lambda}_x},\frac{72\alpha C^2}{\gamma^2\sqrt{\lambda}\tilde{\lambda}_x}\}\nonumber\\
&\quad\quad+\big(2\sqrt{\lambda}+\sqrt{2\log(T)+d\log(1+\frac{T}{\lambda d})}+\alpha C\big) \times \bigg(\sqrt{2mdT\log(1+\frac{T}{\lambda d})}\nonumber\\
    &\quad\quad+\frac{2md\log(1+\frac{T}{\lambda d})}{\alpha}\bigg)\,.\nonumber
\end{align}
Picking $\alpha=\frac{\sqrt{\lambda}+\sqrt{d}}{C}$, we can get
\begin{equation}
    R(T)\le O\big((\frac{C\sqrt{d}}{\gamma^{2}\tilde{\lambda}_{x}}+\frac{1}{\tilde{\lambda}_{x}^{2}})u\log(T)\big)+O\big(d\sqrt{mT}\log(T)\big)+ O\big(mCd\log^{1.5}(T)\big)\,.
\end{equation}
Thus we complete the proof of Theorem \ref{thm:main}.
\section{Proof and Discussions of Theorem \ref{thm: lower bound}}
\label{appendix: proof of lower bound}
Table 1 of the work \cite{he2022nearly} gives a lower bound for linear bandits with adversarial corruption for a single user. The lower bound of $R(T)$ is given by:
$R(T)\geq \Omega(d\sqrt{T}+dC)$. Therefore, suppose our problem with multiple users and $m$ underlying clusters where the arrival times are $T_i$ for each cluster, then for any algorithms, even if they know the underlying clustering structure and keep $m$ independent linear bandit algorithms to leverage the common information of clusters, the best they can get is $R(T)\geq dC+\sum_{i \in [m]}d\sqrt{T_i}$. For a special case where $T_i=\frac{T}{m}, \forall i\in[m]$, we can get $R(T)\geq dC+\sum_{i \in [m]}d\sqrt{\frac{T}{m}}=d\sqrt{mT}+dC$, which gives a lower bound of $\Omega(d\sqrt{mT}+dC)$ for the LOCUD problem. 

Recall that the regret upper bound of RCLUB-WCU shown in Theorem \ref{thm:main} is of $ O\bigg((\frac{C\sqrt{d}}{\gamma^{2}\tilde{\lambda}_{x}}+\frac{1}{\tilde{\lambda}_{x}^{2}})u\log(T)\bigg)+O\big(d\sqrt{mT}\log(T)\big)+ O\big(mCd\log^{1.5}(T)\big)$, asymptotically matching this lower bound with respect to $T$ up to logarithmic factors and with respect to $C$ up to $O(\sqrt{m})$ factors, showing the tightness of our theoretical results (where 
 $m$ are typically very small for real applications).
 
 We conjecture that the gap for the $m$ factor in the $mC$ term of the lower bound is due to the strong assumption that cluster structures are known to prove our lower bound, and whether there exists a tighter lower bound will be left for future work.
\section{Proof of Theorem \ref{thm:occud}}
\label{appendix: proof of occud}
We prove the theorem using the proof by contrapositive. Specifically, in Theorem \ref{thm:occud}, we need to prove that for any $t\geq T_0$, if the detection condition in Line \ref{detect line} of Algo.\ref{occud} for user $i$, then with probability at least $1-5\delta$, user $i$ is indeed a corrupted user. By the proof by contrapositive, we can prove Theorem \ref{thm:occud} by showing that: for any $t\geq T_0$, if user $i$ is a normal user, then with probability at least $1-5\delta$, the detection condition in Line \ref{detect line} of Algo.\ref{occud} will not be satisfied for user $i$.

If the clustering structure is correct at $t$, then for any normal user $i$
\begin{align}
    \Tilde{\btheta}_{i,t}-\hat{\btheta}_{V_{i,t},t}&=\Tilde{\btheta}_{i,t}-\btheta_i+\btheta_i-\hat{\btheta}_{V_{i,t},t}\label{decomposition}\,,
\end{align}
where $\Tilde{\btheta}_{i,t}$ is the non-robust estimation of the ground-truth $\theta_i$, and $\hat{\btheta}_{V_{i,t},t-1}$ is the robust estimation of the inferred cluster $V_{i,t}$ for user $i$ at round $t$. Since the clustering structure is correct at $t$, $\hat{\btheta}_{V_{i,t},t-1}$ is the robust estimation of user $i$'s ground-truth cluster's preference vector $\btheta^{j(i)}=\btheta_i$ at round $t$.

We have 
\begin{align}
    \Tilde{\btheta}_{i,t}-\btheta_i&=(\lambda \boldsymbol{I}+\tilde{\boldsymbol{M}}_{i,t})^{-1} \tilde{\boldsymbol{b}}_{i,t}-\btheta_i\nonumber\\
    &=(\lambda\bI+\sum_{s\in[t]\atop i_s=i}\bx_{a_s}\bx_{a_s}^{\top})^{-1}(\sum_{s\in[t]\atop i_s=i}\bx_{a_s}r_{s})-\btheta_i\nonumber\\
    &=(\lambda\bI+\sum_{s\in[t]\atop i_s=i}\bx_{a_s}\bx_{a_s}^{\top})^{-1}\big(\sum_{s\in[t]\atop i_s=i}\bx_{a_s}(\bx_{a_s}^{\top}\btheta_i+\eta_s)\big)-\btheta_i\label{because i is normal}\\
    &=(\lambda\bI+\sum_{s\in[t]\atop i_s=i}\bx_{a_s}\bx_{a_s}^{\top})^{-1}\big((\lambda\bI+\sum_{s\in[t]\atop i_s=i}\bx_{a_s}\bx_{a_s}^{\top})\btheta_i-\lambda\btheta_i+\sum_{s\in[t]\atop i_s=i}\bx_{a_s}\eta_s)\big)-\btheta_i\nonumber\\
    &=-\lambda\tilde{\boldsymbol{M}}_{i,t}^{\prime-1}\btheta_i+\tilde{\boldsymbol{M}}_{i,t}^{\prime-1}\sum_{s\in[t]\atop i_s=i}\bx_{a_s}\eta_s\,,\nonumber
\end{align}
where we denote $\tilde{\boldsymbol{M}}_{i,t}^{\prime}\triangleq \lambda\bI+\sum_{s\in[t]\atop i_s=i}\bx_{a_s}\bx_{a_s}^{\top}$, and Eq.(\ref{because i is normal}) is because since user $i$ is normal, we have $c_s=0, \forall s: i_s=i$.

Then, we have 
\begin{align}
    \norm{\Tilde{\btheta}_{i,t}-\btheta_i}_2&\leq \norm{\lambda\tilde{\boldsymbol{M}}_{i,t}^{\prime-1}\btheta_i}_2+\norm{\tilde{\boldsymbol{M}}_{i,t}^{\prime-1}\sum_{s\in[t]\atop i_s=i}\bx_{a_s}\eta_s}_2\nonumber\\
    &\leq \lambda\norm{{\Tilde{\bM}_{i,t}^{\prime-\frac{1}{2}}}}_2^2\norm{\btheta_i}_2+\norm{{\Tilde{\bM}_{i,t}^{\prime-\frac{1}{2}}}\sum_{s\in[t]\atop i_s=i}\bx_{a_s}\eta_s}_2\norm{{\Tilde{\bM}_{i,t}^{\prime-\frac{1}{2}}}}_2\label{matrice norm, cauchy again}\\
    &\leq\frac{\sqrt{\lambda}+\norm{\sum_{s\in[t]\atop i_s=i}\bx_{a_s}\eta_s}_{\Tilde{\bM}_{i,t}^{\prime-1}}}{\sqrt{\lambda_{\text{min}}({\Tilde{\bM}_{i,t}^{\prime}})}}\,,\label{min eigen value use 1, courant fisher}\,,
\end{align}
where Eq.(\ref{matrice norm, cauchy again}) follows by the Cauchy–Schwarz inequality and the inequality for the operator norm of matrices, and Eq.(\ref{min eigen value use 1, courant fisher}) follows by the Courant-Fischer theorem and the fact that $\lambda_{\text{min}}(\Tilde{\bM}_{i,t}^{\prime})\geq\lambda$.

Following Theorem 1 in \cite{abbasi2011improved}, for a fixed normal user $i$, with probability at least $1-\delta$ for some $\delta\in(0,1)$ we have:
\begin{align}
    \norm{\sum_{s\in[t]\atop i_s=i}\bx_{a_s}\eta_s}_{\Tilde{\bM}_{i,t}^{\prime-1}}&\leq\sqrt{2\log(\frac{1}{\delta})+\log(\frac{\text{det}(\Tilde{\bM}_{i,t}^{\prime})}{\text{det}(\lambda\bI)})}\nonumber\\
    &\leq \sqrt{2\log(\frac{1}{\delta})+d\log(1+\frac{T_{i,t}}{\lambda d})}\label{det inequality 2}\,,
\end{align}
where Eq.(\ref{det inequality 2}) is because $\text{det}(\Tilde{\bM}_{i,t}^{\prime})\leq\Bigg(\frac{\text{trace}(\lambda\bI+\sum_{s\in[t]\atop i_s=i}\bx_{a_s}\bx_{a_s}^{\top})}{d}\Bigg)^d\leq\big(\frac{\lambda d+T_{i,t}}{d}\big)^d$, and $\text{det}(\lambda\bI)=\lambda^d$.

Plugging this into Eq.(\ref{min eigen value use 1, courant fisher}), we can get 
\begin{equation}
    \norm{\Tilde{\btheta}_{i,t}-\btheta_i}_2\leq\frac{\sqrt{\lambda}+\sqrt{2\log(\frac{1}{\delta})+d\log(1+\frac{T_{i,t}}{\lambda d})}}{\sqrt{\lambda_{\text{min}}({\Tilde{\bM}_{i,t}^{\prime}})}}\label{occud proof result 1}\,.
\end{equation}

Then we need to bound $\norm{\btheta_i-\hat{\btheta}_{V_{i,t},t}}_2$. With the correct clustering, $V_{i,t}=V_{j(i)}$, we have
\begin{align}
    \hat{\btheta}_{V_{i,t},t}-\btheta_i&=\boldsymbol{M}_{{V}_{i,t},t}^{-1}\boldsymbol{b}_{{V}_{j,t},t}\nonumber\\
    &=(\lambda\bI+\sum_{s\in[t]\atop i_s\in V_{j(i)}}w_{i_s,s}\bx_{a_s}\bx_{a_s}^{\top})^{-1}(\sum_{s\in[t]\atop i_s\in V_{j(i)}}w_{i_s,s}\bx_{a_s}r_s)-\btheta_i\nonumber\\
    &=(\lambda\bI+\sum_{s\in[t]\atop i_s\in V_{j(i)}}w_{i_s,s}\bx_{a_s}\bx_{a_s}^{\top})^{-1}(\sum_{s\in[t]\atop i_s\in V_{j(i)}}w_{i_s,s}\bx_{a_s}(\bx_{a_s}^{\top}\btheta_i+\eta_s+c_s)))-\theta_i\label{cluster of i}\\
    &=(\lambda\bI+\sum_{s\in[t]\atop i_s\in V_{j(i)}}w_{i_s,s}\bx_{a_s}\bx_{a_s}^{\top})^{-1}\big((\lambda\bI+\sum_{s\in[t]\atop i_s\in V_{j(i)}}w_{i_s,s}\bx_{a_s}\bx_{a_s}^{\top})\btheta_i-\lambda\btheta_i\nonumber\\
    &\quad+\sum_{s\in[t]\atop i_s\in V_{j(i)}}w_{i_s,s}\bx_{a_s}
    \eta_s+\sum_{s\in[t]\atop i_s\in V_{j(i)}}w_{i_s,s}\bx_{a_s}c_s))\big)-\theta_i\nonumber\\
    &=-\lambda\boldsymbol{M}_{{V}_{i,t},t}^{-1}\btheta_i+\boldsymbol{M}_{{V}_{i,t},t}^{-1}\sum_{s\in[t]\atop i_s\in V_{j(i)}}w_{i_s,s}\bx_{a_s}
    \eta_s+\boldsymbol{M}_{{V}_{i,t},t}^{-1}\sum_{s\in[t]\atop i_s\in V_{j(i)}}w_{i_s,s}\bx_{a_s}c_s\label{difference cluster robust groud-truth}\,.
    \end{align}
Therefore, we have
\begin{align}
    \norm{\btheta_i-\hat{\btheta}_{V_{i,t},t}}_2&\leq \lambda\norm{\boldsymbol{M}_{{V}_{i,t},t}^{-1}\btheta_i}_2+\norm{\boldsymbol{M}_{{V}_{i,t},t}^{-1}\sum_{s\in[t]\atop i_s\in V_{j(i)}}w_{i_s,s}\bx_{a_s}
    \eta_s}_2+\norm{\boldsymbol{M}_{{V}_{i,t},t}^{-1}\sum_{s\in[t]\atop i_s\in V_{j(i)}}w_{i_s,s}\bx_{a_s}c_s}_2\nonumber\\
    &\leq\lambda\norm{{\boldsymbol{M}_{{V}_{i,t},t}^{-\frac{1}{2}}}}_2^2\norm{\btheta_i}_2+\norm{{\boldsymbol{M}_{{V}_{i,t},t}^{-\frac{1}{2}}}\sum_{s\in[t]\atop i_s\in V_{j(i)}}w_{i_s,s}\bx_{a_s}\eta_s}_2\norm{{\boldsymbol{M}_{{V}_{i,t},t}^{-\frac{1}{2}}}}_2\nonumber\\
    &\quad\quad+\norm{{\boldsymbol{M}_{{V}_{i,t},t}^{-\frac{1}{2}}}\sum_{s\in[t]\atop i_s\in V_{j(i)}}w_{i_s,s}\bx_{a_s}\eta_s}_2\norm{{\boldsymbol{M}_{{V}_{i,t},t}^{-\frac{1}{2}}}}_2\label{cauchy, operator norm 12}\\
    &\leq \frac{\sqrt{\lambda}+\norm{\sum_{s\in[t]\atop i_s\in V_{j(i)}}w_{i_s,s}\bx_{a_s}
\eta_s}_{\boldsymbol{M}_{{V}_{i,t},t}^{-1}}+\norm{\sum_{s\in[t]\atop i_s\in V_{j(i)}}w_{i_s,s}\bx_{a_s}
c_s}_{\boldsymbol{M}_{{V}_{i,t},t}^{-1}}}{\sqrt{\lambda_{\text{min}}(\boldsymbol{M}_{{V}_{i,t},t})}}
\end{align}
Let $\Tilde{\bx}_{s}\triangleq \sqrt{w_{i_s,s}}\bx_{a_s}$, $\Tilde{\eta}_{s}\triangleq \sqrt{w_{i_s,s}}\eta_s$, then we have: $\norm{\Tilde{\bx}_{s}}_2\leq\norm{\sqrt{w_{i_s,s}}}_2\norm{\bx_{a_s}}_2\leq1$, $\Tilde{\eta}_{s}$ is still 1-sub-gaussian (since $\eta_s$ is 1-sub-gaussian and $\sqrt{w_{i_s,s}}\leq1$), $\boldsymbol{M}_{{V}_{i,t},t}=\lambda\bI+\sum_{s\in[t]\atop i_s\in V_{j(i)}}\Tilde{\bx}_{s}\Tilde{\bx}_{s}^{\top}$, and $\norm{\sum_{s\in[t]\atop i_s\in V_{j(i)}}w_{i_s,s}\bx_{a_s}
\eta_s}_{\boldsymbol{M}_{{V}_{i,t},t}^{-1}}$ becomes $\norm{\sum_{s\in[t]\atop i_s\in V_{j(i)}}\tilde{\bx}_{s}
\tilde{\eta}_s}_{\boldsymbol{M}_{{V}_{i,t},t}^{-1}}$. Then, following Theorem 1 in \cite{abbasi2011improved}, with probability at least $1-\delta$ for some $\delta\in(0,1)$, for a fixed normal user $i$, we have
\begin{align}
   \norm{\sum_{s\in[t]\atop i_s\in V_{j(i)}}w_{i_s,s}\bx_{a_s}
\eta_s}_{\boldsymbol{M}_{{V}_{i,t},t}^{-1}}&\leq\sqrt{2\log(\frac{1}{\delta})+\log(\frac{\text{det}(\boldsymbol{M}_{{V}_{i,t},t})}{\text{det}(\lambda\bI)})}\nonumber\\
    &\leq \sqrt{2\log(\frac{1}{\delta})+d\log(1+\frac{T_{V_{i,t},t}}{\lambda d})}\label{det inequality 3}\,,
\end{align}
where Eq.(\ref{det inequality 2}) is because $\text{det}(\boldsymbol{M}_{{V}_{i,t},t})\leq\Bigg(\frac{\text{trace}(\lambda\bI+\sum_{s\in[t]\atop i_s\in V_{j(i)}}\bx_{a_s}\bx_{a_s}^{\top})}{d}\Bigg)^d\leq\big(\frac{\lambda d+T_{V_{i,t},t}}{d}\big)^d$, and $\text{det}(\lambda\bI)=\lambda^d$.

For $\norm{\sum_{s\in[t]\atop i_s\in V_{j(i)}}w_{i_s,s}\bx_{a_s}
c_s}_{\boldsymbol{M}_{{V}_{i,t},t}^{-1}}$, we have
\begin{align}
    \norm{\sum_{s\in[t]\atop i_s\in V_{j(i)}}w_{i_s,s}\bx_{a_s}
c_s}_{\boldsymbol{M}_{{V}_{i,t},t}^{-1}}&\leq \sum_{s\in[t]\atop i_s\in V_{j(i)}}\left|c_s\right| w_{i_s,s}\norm{\bx_{a_s}}_{\boldsymbol{M}_{{V}_{i,t},t}^{-1}}\nonumber\\
&\leq \alpha C\label{bound 21312}\,,
\end{align}
where Eq.(\ref{bound 21312}) is because $w_{i_s,s}\leq\frac{\alpha}{\norm{\bx_{a_s}}_{\boldsymbol{M}_{i_s,s}^{\prime-1}}}\leq\frac{\alpha}{\norm{\bx_{a_s}}_{\boldsymbol{M}_{i_s,t}^{\prime-1}}}\leq\frac{\alpha}{\norm{\bx_{a_s}}_{\boldsymbol{M}_{{V}_{i,t},t}^{-1}}}$ (since $\boldsymbol{M}_{{V}_{i,t},t}\succeq\bM_{i_s,t}^{\prime}\succeq\bM_{i_s,s}^{\prime}$, $\bM_{i_s,s}^{\prime-1}\succeq\bM_{i_s,t}^{\prime-1}\succeq\boldsymbol{M}_{{V}_{i,t},t}^{-1}$, $\norm{\bx_{a_s}}_{\bM_{i_s,s}^{\prime-1}}\geq\norm{\bx_{a_s}}_{\bM_{i_s,t}^{\prime-1}}\geq\norm{\bx_{a_s}}_{\boldsymbol{M}_{{V}_{i,t},t}^{-1}}$), and $\sum_{s\in[t]} \left|c_s\right|\leq C$.

Therefore, we have
\begin{equation}
    \norm{\btheta_i-\hat{\btheta}_{V_{i,t},t}}_2\leq \frac{\sqrt{\lambda}+\sqrt{2\log(\frac{1}{\delta})+d\log(1+\frac{T_{V_{i,t},t}}{\lambda d})}+\alpha C}{\sqrt{\lambda_{\text{min}}(\boldsymbol{M}_{{V}_{i,t},t})}}\label{bound norm difference for occud 2}\,.
\end{equation}

With Eq.(\ref{bound norm difference for occud 2}), Eq.(\ref{occud proof result 1}) and Eq.(\ref{decomposition}), together with Lemma \ref{sufficient time}, we have that for a normal user $i$, for any $t\geq T_0$, with probability at least $1-5\delta$ for some $\delta\in(0,\frac{1}{5})$
\begin{align}
    \norm{\Tilde{\btheta}_{i,t}-\hat{\btheta}_{V_{i,t},t}}&\leq\norm{\Tilde{\btheta}_{i,t}-\btheta_i}_2+\norm{\btheta_i-\hat{\btheta}_{V_{i,t},t}}_2  \nonumber\\
    &\leq \frac{\sqrt{\lambda}+\sqrt{2\log(\frac{1}{\delta})+d\log(1+\frac{T_{i,t}}{\lambda d})}}{\sqrt{\lambda_{\text{min}}({\Tilde{\bM}_{i,t}^{\prime}})}}+\frac{\sqrt{\lambda}+\sqrt{2\log(\frac{1}{\delta})+d\log(1+\frac{T_{V_{i,t},t}}{\lambda d})}+\alpha C}{\sqrt{\lambda_{\text{min}}(\boldsymbol{M}_{{V}_{i,t},t})}}\,,
\end{align}
which is exactly the detection condition in Line \ref{detect line} of Algo.\ref{occud}.

Therefore, by the proof by contrapositive, we complete the proof of Theorem \ref{thm:occud}.

\section{Description of Baselines}
\label{appendix: baselines}
We compare RCLUB-WCU to the following five baselines for recommendations.
\begin{itemize}
    \item LinUCB\cite{li2010contextual}: A state-of-the-art bandit approach for a single user without corruption. 
    \item LinUCB-Ind: Use a separate LinUCB for each user.
    \item CW-OFUL\cite{he2022nearly}: A state-of-the-art bandit approach for single user with corruption.
    \item CW-OFUL-Ind: Use a separate CW-OFUL for each user.
    \item CLUB\cite{gentile2014online}: A graph-based clustering of bandits approach for multiple users without corruption.
    \item SCLUB\cite{li2019improved}: A set-based clustering of bandits approach for multiple users without corruption.
\end{itemize}

\section{More Experiments}
\label{sec:more experiments}
\subsection{Different Corruption Levels}
To see our algorithm's performance under different corruption levels, we conduct the experiments under different corruption levels for RCLUB-WCU, CLUB, and SCLUB on Amazon and Yelp datasets. Recall the corruption mechanism in Section \ref{exp:synthetic}, we set $k$ as 1,000; 10,000; 100,000. The results are shown in Fig.\ref{fig:corruption level}. All the algorithms' performance becomes worse when the corruption level increases. But RCLUB-WCU is much robust than the baselines.
\begin{figure*}
    \subfigure[Amazon Corruption Level]{
    \includegraphics[scale=0.35]{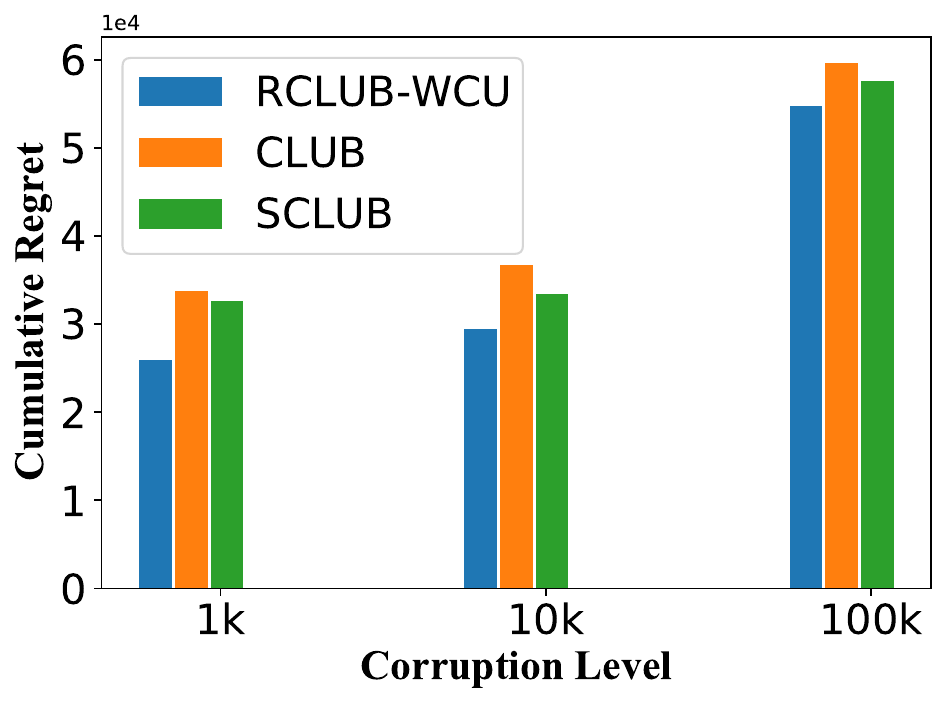}
    }
    \subfigure[Yelp Corruption Level]{
    \includegraphics[scale=0.35]{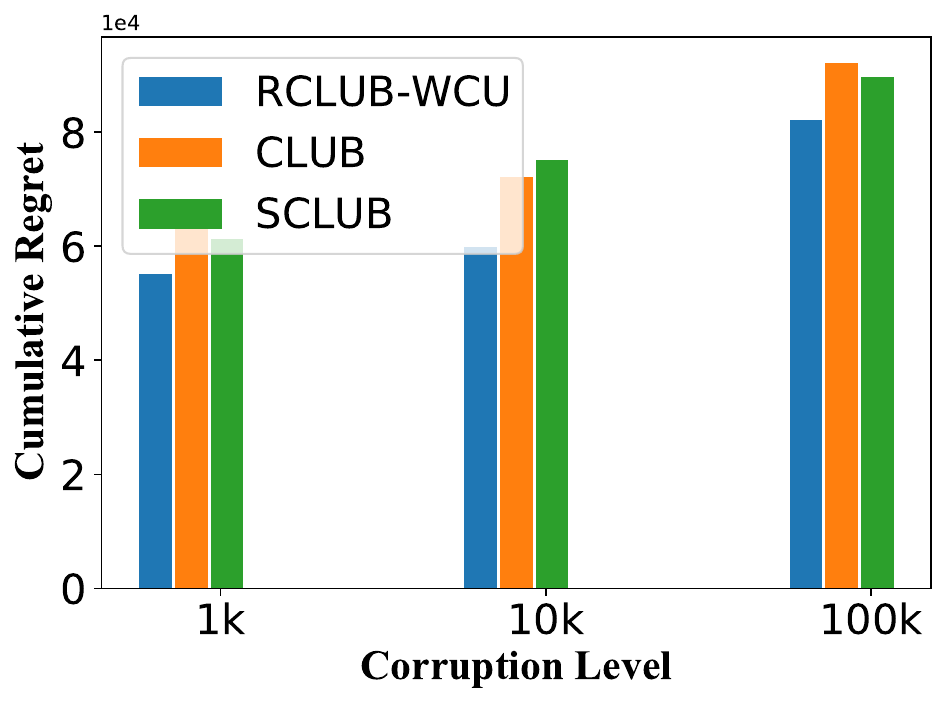}
    }
    \caption{Cumulative regret in different corruption levels}
    \label{fig:corruption level}
    
\end{figure*}
\subsection{Different Cluster numbers}
Following \cite{li2018online}, we test the performances of the cluster-based algorithms (RCLUB-WCU, CLUB, SCLUB) when the underlying cluster number changes. We set $m$ as 5, 10, 20, and 50. The results are shown in Fig.\ref{fig:cluster num}. All these algorithms' performances decrease when the cluster numbers increase, matching our theoretical results. The performances of CLUB and SCLUB decrease much faster than RCLUB-WCU, indicating that RCLUB-WCU is more robust when the underlying user cluster number changes.

\begin{figure*}
     \subfigure[Amazon Cluster Number]{
    \includegraphics[scale=0.35]{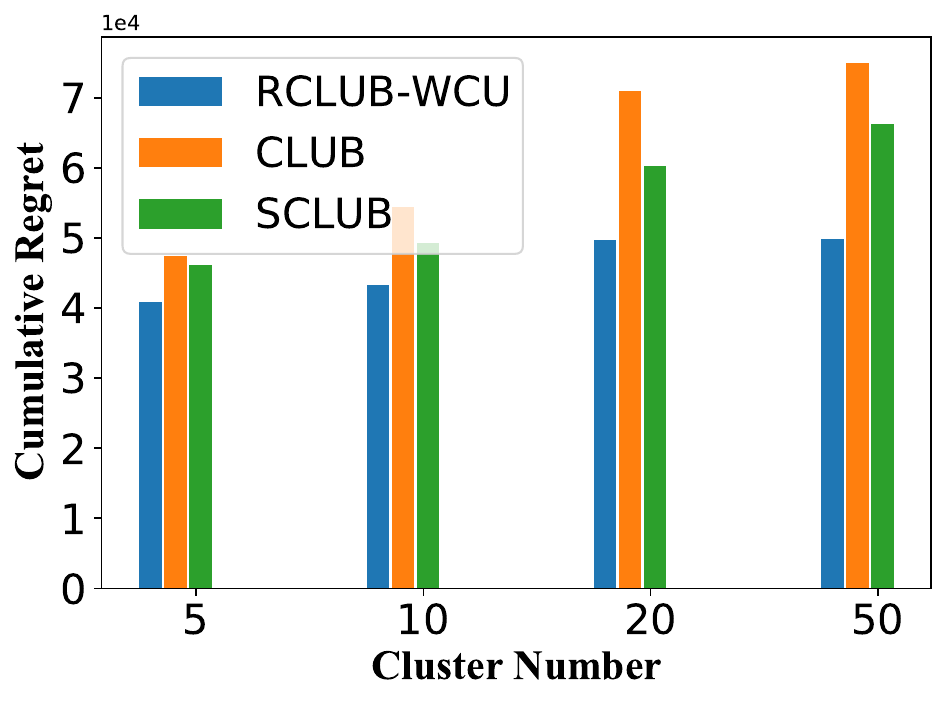}
    }
    \subfigure[Yelp Cluster Number]{
    \includegraphics[scale=0.35]{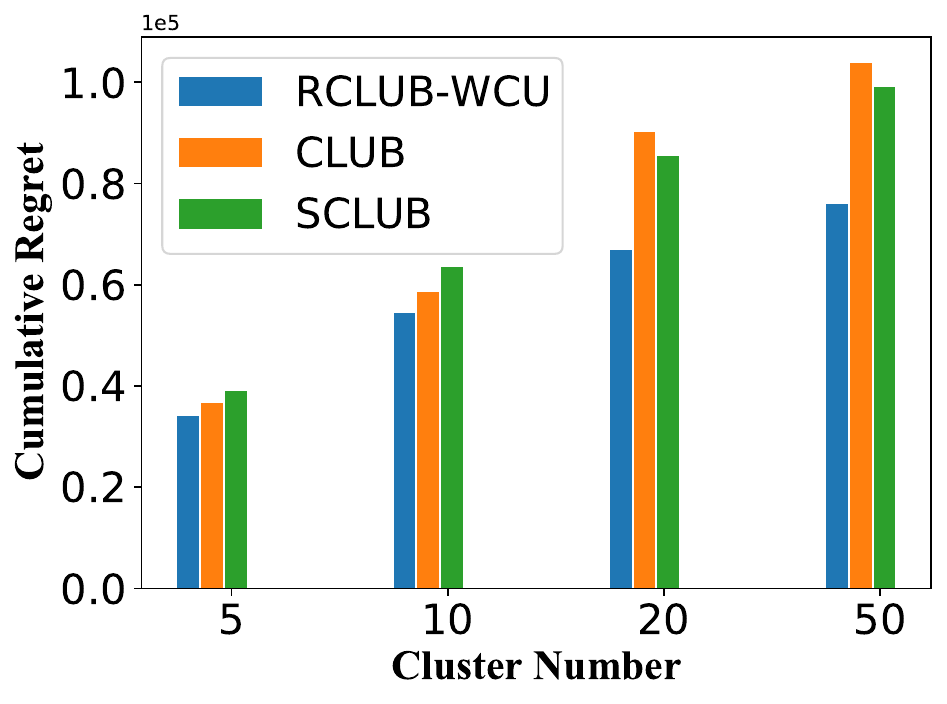}
    }
     \caption{Cumulative regret with different cluster numbers }
    \label{fig:cluster num}
\end{figure*}
\end{document}